\title{SequentialAttention++ for Block Sparsification: Differentiable Pruning Meets Combinatorial Optimization}
\date{}
\author{
Taisuke Yasuda\thanks{Corresponding authors: \texttt{taisukey@cs.cmu.edu}, \texttt{axiotis@google.com}, \texttt{thomasfu@google.com}}\, \thanks{Work done while at Google Research.}
\\ CMU\and
Kyriakos Axiotis\footnotemark[1] \\ Google Research \and
Gang Fu\footnotemark[1] \\ Google Research \and
MohammadHossein Bateni \\ Google Research \and
Vahab Mirrokni  \\ Google Research
}
\begin{document}

\maketitle

\begin{abstract}
Neural network pruning is a key technique towards engineering large yet scalable, interpretable, and generalizable models. Prior work on the subject has developed largely along two orthogonal directions: (1) differentiable pruning for efficiently and accurately scoring the importance of parameters, and (2) combinatorial optimization for efficiently searching over the space of sparse models. We unite the two approaches, both theoretically and empirically, to produce a coherent framework for structured neural network pruning in which differentiable pruning guides combinatorial optimization algorithms to select the most important sparse set of parameters. Theoretically, we show how many existing differentiable pruning techniques can be understood as nonconvex regularization for group sparse optimization, and prove that for a wide class of nonconvex regularizers, the global optimum is unique, group-sparse, and provably yields an approximate solution to a sparse convex optimization problem. The resulting algorithm that we propose, \emph{SequentialAttention++}, advances the state of the art in large-scale neural network block-wise pruning tasks on the ImageNet and Criteo datasets.
\end{abstract}

\section{Introduction}

Pruning methods for neural networks \citep{LDS1989} replace dense weight matrices by sparse approximations, which offer improved generalization and inference efficiency in terms of storage, energy consumption, and other computational resources. In various common formulations, the problem of computing the best sparse approximation to a dense weight matrix is intractable as it generalizes the sparse linear regression problem, which is known to be NP-hard even to approximate \citep{Nat1995, FKT2015, GV2021, PSZ2022}. Despite this fact, a wide variety of techniques have proven to be quite successful in practice. This includes magnitude pruning, $\ell_1$ regularization, greedy coordinate descent, sampling, among others.

While earlier works have focused on unstructured (i.e., entrywise) sparsity, which has been an active and fruitful area, researchers have rapidly recognized the importance of \emph{structured} sparsity, which enforces that the sparse approximation respects certain patterns, such as block structure. These structural constraints often lead to further efficiency gains due to improved hardware utilization \citep{AHS2017, PY2021, LWZM2022}. Our work thus focuses on developing new and improved techniques for structured sparsification of weight matrices, and in particular on block sparsification \citep{MZX+2023}, which allow for a balance between performance gains from hardware utilization and reduced computation due to sparsity \citep{GNYZ2023}.

\subsection{Importance scoring and combinatorial optimization}

We argue that existing approaches to neural network pruning have developed along two orthogonal directions: algorithms for \emph{importance scoring} and algorithms for \emph{combinatorial optimization}. We roughly think of importance scoring algorithms as those that aim to select a small number of important entries (or blocks) of weight matrices, while we think of combinatorial optimization algorithms as wrapper methods that use the importance scoring algorithms as oracles to iteratively construct the desired (block) sparse weight matrix.

Among importance scoring algorithms, early popular choices have included magnitude pruning \citep{TF1995, HPTD2015}, where the magnitude of each trainable parameter serves as a proxy for its importance, as well as methods based on gradients \citep{Kar1990, SWR2020}, Hessians \citep{LDS1989, HSW1993, SA2020,frantar2023sparsegpt}, and other statistics of the weights. Other works have incorporated $\ell_1$ regularization \citep{WWWCL2016, YYMYZGW2019} to encourage sparsity. More recently, a class of techniques broadly termed \emph{differentiable pruning} inspired by techniques for differentiable neural architecture search \citep{LSY2019} have increased in popularity, where importance scores and/or soft masks are trained together with the network weights in a differentiable manner \citep{XWR2019, VTMST2019, KH2020, RSN2020, SSM2020, ZLCCJ2022}. Variations of this idea use the network weights themselves to represent the ``importance scores'', and simply use a transformation of the original network weights \citep{SJPLT2021, VV2023, CAN2023}.

As for the combinatorial optimization aspects of pruning, the use of iterative or greedy procedures has long been explored and is known to improve sparsification quality over ``one-shot'' uses of importance scoring algorithms \citep{LDS1989, HSW1993, Str1997, FC2019}. The work of \citet{HSL2022} gives a theoretical justification of this observation via connections to weakly submodular optimization. Combinatorial optimization algorithms beyond greedy approaches, especially local search methods that improve sparsity patterns via local swaps such as iterative hard thresholding (IHT), have long been known in the submodular optimization literature, and have recently been shown to be extremely effective when combined with magnitude pruning \citep{EGMCE2020, PIVA2021, kuznedelev2023accurate, BCMHPZM2023}. The work of \citet{PIVA2021} also provides strong theoretical guarantees for their approach, \emph{ACDC}. Similar ideas have also been termed as ``neuroregeneration'' in work of \citet{LCCAYKSPWM2021}.

Given these two highly fruitful approaches to the problem of pruning neural networks, it is natural to ask how recent advances in importance scoring algorithms and combinatorial optimization algorithms can work in concert. We investigate this question from both theoretical and empirical perspectives.

\subsection{Theoretical results}

We first present a theoretical investigation of differentiable pruning techniques for block sparsification when the objective function $\mathcal L:\mathbb R^n \to\mathbb R$ is strictly convex and differentiable. This already captures several interesting problems where block sparsification of weight matrices is desired, such as multinomial logistic regression and multiple response linear regression. We take the $n$ variables of our objective function to be partitioned into disjoint groups $\{T_i\}_{i=1}^t$ where $T_i\subseteq[n]$ and possibly have varying size. For instance, in the context of block sparsification, $\mathcal L$ could correspond to the multinomial logistic regression objective function with $K$ classes and $d$ features, and the $n = Kd$ variables could be partitioned into $t$ blocks $T_1, T_2, \dots, T_t$. Furthermore, we will also consider an $\ell_2$ regularization term on the parameters $\bfbeta$, that is, we study variants of the problem $\min_{\bfbeta\in\mathbb R^n}\mathcal L(\bfbeta) + \lambda\norm{\bfbeta}_2^2$. Note that explicit $\ell_2$ regularization is a standard component of machine learning architectures, and also appears \emph{implicitly} whenever a loss function is optimized with gradient descent \citep{Sha2012}, with the regularization parameter $\lambda$ being controlled by learning rate parameters and early stopping \citep{SPR2018}.

Our contributions are twofold: (1) we show that a wide variety of differentiable pruning techniques can all be understood as an implementation of nonconvex regularization that generalizes the group LASSO, and (2) we show that a wide class of nonconvex regularizers give a unique $1$-sparse global minimum that coincides with the unique 1-sparse global minimum of a corresponding group LASSO problem. These two results together establish that many differentiable pruning techniques work simply by identifying the \emph{same $1$-sparse solution as the group LASSO}. In turn, it is known that the $1$-sparse solution found by the group LASSO is the variable block with the largest squared gradient \citep{AY2023}, which is equivalent to the Orthogonal Matching Pursuit \citep{PRK1993, SSZ2010, LS2017, EKDN2018} when applied sequentially (see Appendix \ref{sec:ompr-proof}). Thus together, these results make progress towards understanding the inner workings of modern differentiable pruning methods.

\subsubsection{Differentiable pruning as nonconvex regularization}

For our first contribution, we observe that if we minimize the loss $\mathcal L$ with each of the variable groups $\bfbeta\vert_{T_i}$ for $i\in[t]$ replaced by a ``masked'' version $q(\bfw_i) \bfbeta\vert_{T_i}$, and with regularization on $\bfw$ and $\bfbeta$, then this problem is equivalent to another optimization problem that simply optimizes $\mathcal L$ with a different, and often sparsity-inducing, regularizer. A basic version of this observation already appears in works of \citet{Hof2017, AY2023}, where it is shown that if the masks $q$ are just the identity, then we recover the usual group LASSO problem, that is,
\begin{align*}
    \min_{\bfw\in\mathbb R^t, \bfbeta\in\mathbb R^n}\mathcal L(\{\bfw_i \bfbeta\vert_{T_i}\}_{i=1}^t) + \frac{\lambda}{2}\parens*{\norm{\bfw}_2^2 + \norm{\bfbeta}_2^2} = \min_{\bfbeta\in\mathbb R^n}\mathcal L(\bfbeta) + \lambda\sum_{i=1}^t \norm{\bfbeta\vert_{T_i}}_2
\end{align*}
where $\{\bfw_i \bfbeta\vert_{T_i}\}_{i=1}^t$ denotes the concatenation of the ``masked'' groups $\bfw_i\bfbeta\vert_{T_i}$ for $i\in[t]$. We generalize this observation and show that this framework also applies to other ideas popular in the differentiable pruning literature, such as applying $\ell_1$ regularization on the masks $\bfw$ to induce sparsity \citep{YYMYZGW2019} or applying softmax-type masks such as $\exp(\bfw_i)$ \citep{YBCFFM2023}. We note that prior to our work, there was little theoretical understanding on the value of applying such techniques in the context of differentiable pruning.

We also apply similar ideas to differentiable pruning techniques that use the network weights themselves as importance scores \citep{SJPLT2021, CAN2023}. Here, the basic observation is that if one optimizes a loss function $\mathcal L$ with variables $\bfbeta$ replaced by the (signed) entrywise square $\bfbeta\odot \bfbeta$, then this results in a ``rich get richer'' dynamic where large weights evolve to be larger while smaller weights are driven to zero, resulting in sparse solutions. This idea also has connections to exponentiated gradient descent which also results in sparse solutions \citep{VKR2019, AW2020a, AW2020b}. However, prior work only handles entrywise sparsity and does not address the question of structured pruning. We show that these ideas can also be understood in the framework of sparsity-inducing regularizers, even in the group setting. Here, we show that ``masking'' each of the variable groups $\bfbeta\vert_{T_i}$ by its $\ell_2$ norm $\norm{\bfbeta\vert_{T_i}}_2$ gives a natural group generalization of this technique, and that this gives an optimization problem that is again equivalent to the group LASSO.

\subsubsection{Unique sparse global minima}
\label{sec:intro:unique-sparse-global-minima}

Our second set of contributions is to analyze the solutions of a wide class of nonconvex regularizers. We now consider the following regularized problem, where $q:\mathbb R_+ \to \mathbb R_+$ is a strictly increasing and subadditive function with $q(0) = 0$, and $\lambda > 0$ is a regularization parameter:
\begin{equation}\label{eq:q-reg}
    \min_{\bfbeta\in\mathbb R^n} \mathcal L(\bfbeta) + \lambda \cdot q^{-1}\parens*{\sum_{i=1}^t q(\norm{\bfbeta\vert_{T_i}}_2)}.
\end{equation}
For instance, some popular choices of $q$ include the absolute value $q(x) = \abs{x}$, $p$-th powers $q(x) = \abs{x}^p$ for $p<1$, or logarithmic regularizers such as $q(x) = \log(1+x)$. In general, the class of such $q$ (strictly) contains the set of all concave functions $q$ that vanish at the origin. Note that the form of \eqref{eq:q-reg} slightly differs from the usual form of nonconvex regularizers, as it applies $q^{-1}$ on the sum $\sum_{i=1}^t q(\norm{\bfbeta\vert_{T_i}}_2)$ rather than taking the regularizer to just be $\sum_{i=1}^t q(\norm{\bfbeta\vert_{T_i}}_2)$. This does not substantially change the nature of the optimization problem as it is the Lagrangian dual for the same constraint. The main result of this section is Theorem \ref{thm:q-reg}, which relates the group $q$-regularized objective \eqref{eq:q-reg} to the following corresponding group LASSO objective:
\begin{equation}\label{eq:group-lasso}
    \min_{\bfbeta\in\mathbb R^n} \mathcal L(\bfbeta) +\lambda\sum_{i=1}^t \norm{\bfbeta\vert_{T_i}}_2.
\end{equation}

\begin{restatable}[Unique sparse global minima]{theorem}{UniqueSparseGlobalMinima}
\label{thm:q-reg}
Let $q:\mathbb R_+\to\mathbb R_+$ be strictly increasing, subadditive (i.e., $q(a+b)\leq q(a) + q(b)$ for $a,b\in\mathbb R^+$), and satisfy $q(0) = 0$. If \eqref{eq:group-lasso} has a unique minimizer $\bfbeta^*$ with group sparsity at most $1$, then $\bfbeta^*$ is also the unique minimizer for \eqref{eq:q-reg}.
\end{restatable}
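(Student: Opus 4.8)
The plan is to reduce the theorem to a single pointwise comparison between the two regularizers, followed by a short sandwiching argument. Write $G(\bfbeta) = \mathcal L(\bfbeta) + \lambda\, q^{-1}\parens*{\sum_{i=1}^t q(\norm{\bfbeta\vert_{T_i}}_2)}$ for the objective in \eqref{eq:q-reg} and $F(\bfbeta) = \mathcal L(\bfbeta) + \lambda\sum_{i=1}^t\norm{\bfbeta\vert_{T_i}}_2$ for the group LASSO objective in \eqref{eq:group-lasso}. The crux is the following lemma: for every $\bfbeta$, the $q$-regularizer dominates the group LASSO regularizer, i.e.\ $q^{-1}\parens*{\sum_{i=1}^t q(\norm{\bfbeta\vert_{T_i}}_2)} \ge \sum_{i=1}^t \norm{\bfbeta\vert_{T_i}}_2$, with equality whenever the group sparsity is at most $1$.

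To prove the lemma, I would set $x_i = \norm{\bfbeta\vert_{T_i}}_2 \ge 0$. Iterating subadditivity across the $t$ terms gives $q\parens*{\sum_i x_i} \le \sum_i q(x_i)$, and since $q$ is strictly increasing its inverse $q^{-1}$ is strictly increasing, so applying $q^{-1}$ to both sides preserves the direction and yields $\sum_i x_i \le q^{-1}\parens*{\sum_i q(x_i)}$. This is precisely the claimed domination. For the equality case, if at most one $x_i$ is nonzero, say only $x_1$, then the hypothesis $q(0)=0$ collapses the sum to $q^{-1}(q(x_1)) = x_1 = \sum_i x_i$, so the two regularizers agree at such a point.

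Given the lemma, the theorem follows immediately. Since $\bfbeta^*$ has group sparsity at most $1$, the equality case gives $G(\bfbeta^*) = F(\bfbeta^*)$. For any $\bfbeta \ne \bfbeta^*$, the domination gives $G(\bfbeta) \ge F(\bfbeta)$, while the hypothesis that $\bfbeta^*$ is the \emph{unique} minimizer of $F$ gives the strict inequality $F(\bfbeta) > F(\bfbeta^*)$. Chaining these, $G(\bfbeta) \ge F(\bfbeta) > F(\bfbeta^*) = G(\bfbeta^*)$, so $\bfbeta^*$ strictly beats every competitor and is therefore the unique minimizer of $G$.

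The argument is short, and there is no serious technical obstacle once the regularizer comparison is identified; the one point requiring genuine care is noticing that subadditivity makes the $q$-regularizer \emph{larger} (not smaller) than the group LASSO penalty, which is what allows the strict gap from uniqueness under $F$ to transfer to $G$. It is worth emphasizing that no strictness of the subadditivity is needed: we only require $G \ge F$ pointwise together with exact equality at the single point $\bfbeta^*$, and the latter uses only $q(0)=0$ and the at-most-one-nonzero-group structure. Note also that strict convexity of $\mathcal L$ is not invoked directly here, since uniqueness of the group LASSO minimizer is taken as a hypothesis.
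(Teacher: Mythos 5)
Your proposal is correct and follows essentially the same route as the paper: the paper's Lemmas \ref{lem:subadditivity} and \ref{lem:one-sparse} are exactly your domination inequality and its equality case at group sparsity at most $1$, and the paper's appendix proof of Theorem \ref{thm:q-reg} is the same sandwiching chain $G(\bfbeta) \ge F(\bfbeta) > F(\bfbeta^*) = G(\bfbeta^*)$ that you give. The only cosmetic difference is that you merge the two lemmas into one statement.
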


We make several remarks about Theorem \ref{thm:q-reg}. First, we justify why the assumption of the theorem is not vacuous: that is, we explain why the group LASSO objective \eqref{eq:group-lasso} has sparse solutions. In recent work of \citet{AY2023} the following are shown if $\mathcal L$ is strongly convex and differentiable:
\begin{itemize}
    \item If $\lambda\geq \tau$ for $\tau = \max_{i=1}^t \norm{\nabla \mathcal L(0)\vert_{T_i}}_2$, then \eqref{eq:group-lasso} has a unique global minimizer at $\bfbeta = 0$.
    \item If $\lambda < \tau$ is sufficiently close to $\tau$, then \eqref{eq:q-reg} has a unique $1$-sparse global minimizer.
\end{itemize}
Thus, when $\lambda$ is large enough, Theorem \ref{thm:q-reg} establishes that \eqref{eq:q-reg} has a unique sparse global minimum. 

Furthermore, \citet{AY2023} also show that the above global minimizer of the group LASSO problem \eqref{eq:group-lasso} with group sparsity $1$ is supported on a group $T_i$ that maximizes $\norm{\nabla \mathcal L(0)\vert_{T_i}}_2$, that is, it selects the group of variables that locally provides the largest improvement in the objective function cost. Repeatedly alternating between selecting such a feature and re-optimizing over the support is an algorithm known as the \emph{group orthogonal matching pursuit (group OMP)}, and has provable guarantees for group sparse convex optimization when $\mathcal L$ satisfies the restricted strong convexity and restricted smoothness properties \citep{AY2023}. It is also shown that a related local search algorithm known as \emph{group orthogonal matching pursuit with replacement (group OMPR)} also applies in this context, which has improved guarantees.

Finally, we emphasize that it is generally difficult to establish structural results for nonconvex optimization problems, even for simple convex problems with nonconvex regularizers. Thus, we believe that our results may be of independent interest in the literature of nonconvex optimization.

\subsection{Empirical results: SequentialAttention++}

We now apply our theoretical insights of combining differentiable pruning and combinatorial optimization to develop a novel algorithm for block neural network pruning, which we call \emph{SequentialAttention++}. SequentialAttention++ is  primarily a fusion of two prior techniques: \emph{Sequential Attention}, a feature selection technique based on differentiable pruning developed in work of \citet{YBCFFM2023}, and \emph{ACDC}, which is a 
highly effective stochastic adaptation of the classic iterative hard thresholding (IHT) algorithm 
\citep{PIVA2021} from the combinatorial optimization literature.

Sequential Attention \citep{YBCFFM2023} is an algorithm for feature selection on neural networks, that introduces a softmax mask that is trained together with the neural network weights. Each of the $n$ input features is scaled by a differentiable mask $A_i = \exp(L_i) / \sum_{j=1}^n \exp(L_j)$ for a vector $L\in\mathbb R^n$ of logits. Note that our theoretical results on differentiable pruning, and in particular Lemma \ref{lem:unnormalized-softmax}, suggests that this roughly corresponds to performing a log-sum regularization on the corresponding weights for these features. We first extend this to the block sparsification setting by instead scaling each block of weights to prune by a similar softmax mask (see Figure \ref{fig:differentiable-pruning}). Note that in this new setting, Lemma \ref{lem:unnormalized-softmax} shows that this corresponds to a \emph{group} log-sum regularization on each of the blocks.

\begin{figure}
    \centering
    \includegraphics[width=7.5cm]{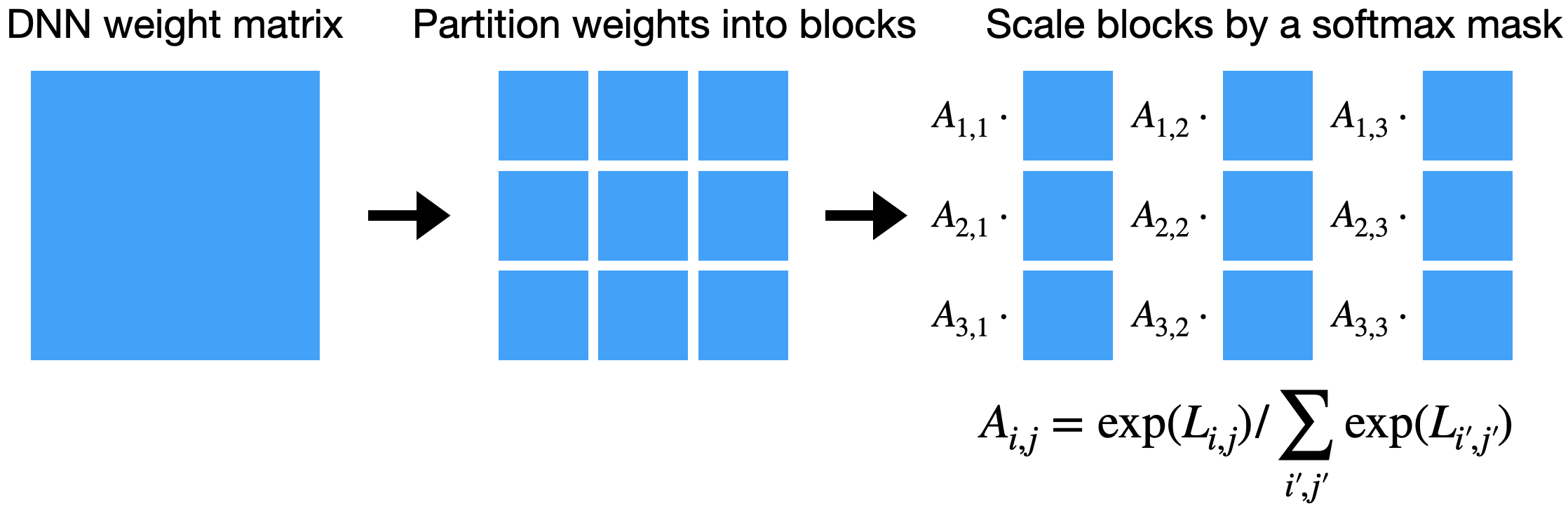}
    \caption{Differentiable pruning of weight blocks}
    \label{fig:differentiable-pruning}
\end{figure}

We then use this differentiable pruning technique as part of a local search procedure inspired by ACDC \citep{PIVA2021}. In the originally proposed ACDC algorithm, the neural network is trained in multiple phases, where the phases alternate between a ``dense'' training phase and a ``sparse'' training phase. During the dense phases, the weights are trained in the standard way, whereas in the sparse phases, only a sparse set of weights corresponding to the top $k$ weights at the beginning of the phase (i.e., chosen by magnitude pruning) are used. The idea here is that if a suboptimal sparse support is selected during the sparse phase, then this support can be modified during the dense phase. We note that one of the weaknesses of this algorithm is the use of the weight magnitudes as a proxy for the importance of the weights, whereas improved parameter importance estimation is possible by introducing differentiable pruning techniques. Thus in our SequentialAttention++ algorithm, we modify the ACDC algorithm by training a softmax mask together with the neural network weights during the dense phase as in Figure \ref{fig:differentiable-pruning}, and then using the softmax mask to select a sparse support during the sparse phases. Our theoretical results establish provable guarantees for a slightly modified version of this algorithm, by showing that log-sum regularization can be integrated with a similar local search algorithm that alternates between dropping small weights from the support, selecting weights via regularization, and optimizing on the new support (see Theorem \ref{thm:ompr-bicriteria} and Appendix \ref{sec:ompr-proof}).





\section{Theory}
\label{sec:theory}


In Section \ref{sec:theory}, we present our theoretical results on differentiable pruning and local search algorithms for DNN sparsification. Missing proofs can be found in Appendix \ref{sec:theory-proofs}.

\subsection{Differentiable pruning as nonconvex regularization}

In this section, we show how a wide variety of differentiable pruning techniques studied in the literature can be viewed as nonconvex regularizers. As described earlier in Section \ref{sec:intro:unique-sparse-global-minima}, we later show that nonconvex regularization can in fact be connected to provable guarantees for sparse convex optimization by implementing the orthogonal matching pursuit algorithm and its variants. Thus, together, we give the first steps towards a full theoretical analysis of many popular differentiable pruning techniques in the literature.

\subsubsection{Unnormalized softmax}

The softmax is a popular differentiable sparsity-inducing technique, where a vector is transformed by exponentiating each entry and normalizing the result. The softmax forms the backbone of many modern ML techniques ranging from multinomial logistic regression to differentiable architecture search \citep{LSY2019} to attention mechanisms and transformers \citep{VSP+17}, and thus a theoretical understanding of the softmax is critical mission for modern machine learning theory.

We take a step towards this by considering \emph{unnormalized} softmax, which corresponds to a simple entrywise exponentiation. The unnormalized softmax is a popular alternative to the usual softmax as it still captures its sparsity-inducing properties \citep{AW2020a, AW2020b}, while its simplicity allows for more efficient implementations. We show that, in fact, unnormalized softmax can be viewed as a type of log-sum regularization, which is a popular relaxation of the $\norm{\cdot}_0$ norm that has been often considered in the machine learning and signal processing literatures \citep{RK1999, WN2009, QSDCW2020, Tug2022, ZLZJWZ2023}.

\begin{lemma}[Unnormalized softmax as log-sum regularization]
\label{lem:unnormalized-softmax}
\begin{align*}
    \min_{\bfw\in\mathbb R^t, \bfbeta\in\mathbb R^n} \mathcal L(\{\exp(\bfw_i)\bfbeta\vert_{T_i}\}_{i=1}^t) + \lambda \parens*{\norm{\bfw}_2^2 + \norm{\bfbeta}_2^2} = \min_{\bfu\in\mathbb R^n} \mathcal L(\bfu) + \lambda \sum_{i=1}^t q(\norm{\bfu\vert_{T_i}}_2)
\end{align*}
where $q(a) = W(2a^2)^2/4 + W(2a^2)/2$ and $W$ is the Lambert $W$ function, i.e., the inverse of $f(W) = We^W$.
\end{lemma}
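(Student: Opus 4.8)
The plan is to recognize this as a variable-elimination (partial minimization) identity: the loss term on the left depends on $(\bfw,\bfbeta)$ only through the products $\bfu\vert_{T_i} := \exp(\bfw_i)\,\bfbeta\vert_{T_i}$, so I would first reparametrize the left-hand side by the vector $\bfu\in\mathbb R^n$ that these products form. Since the map $(\bfw,\bfbeta)\mapsto \bfu$ is surjective (take $\bfw=0$ and $\bfbeta=\bfu$), minimizing over $(\bfw,\bfbeta)$ is the same as first minimizing the penalty over all preimages of a fixed $\bfu$ and then minimizing over $\bfu$. This reduces the claim to the single identity
\[
    \min_{\bfw,\bfbeta\,:\,\exp(\bfw_i)\bfbeta\vert_{T_i}=\bfu\vert_{T_i}\ \forall i}\ \bigl(\norm{\bfw}_2^2+\norm{\bfbeta}_2^2\bigr) \;=\; \sum_{i=1}^t q(\norm{\bfu\vert_{T_i}}_2),
\]
after which the loss terms $\mathcal L(\bfu)$ match on both sides and the common factor $\lambda$ pulls out.

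Next I would observe that both the penalty $\norm{\bfw}_2^2+\norm{\bfbeta}_2^2=\sum_i(\bfw_i^2+\norm{\bfbeta\vert_{T_i}}_2^2)$ and the constraints decouple across the groups $T_1,\dots,T_t$, so the inner minimization splits into $t$ independent scalar problems. For a fixed group with $a:=\norm{\bfu\vert_{T_i}}_2$, the constraint $\exp(\bfw_i)\bfbeta\vert_{T_i}=\bfu\vert_{T_i}$ forces $\bfbeta\vert_{T_i}=\exp(-\bfw_i)\,\bfu\vert_{T_i}$ (since $\exp(\bfw_i)>0$), hence $\norm{\bfbeta\vert_{T_i}}_2^2=a^2\exp(-2\bfw_i)$. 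The per-group cost therefore collapses to the one-dimensional problem
\[
    g(a)\;:=\;\min_{w\in\mathbb R}\ \bigl(w^2+a^2 e^{-2w}\bigr),
\]
and it remains to show $g(a)=q(a)$.

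To solve the scalar problem I would note that $w\mapsto w^2+a^2e^{-2w}$ is strictly convex (its second derivative $2+4a^2e^{-2w}$ is positive) and coercive for $a\neq 0$, so it has a unique minimizer determined by the stationarity condition $2w-2a^2e^{-2w}=0$, i.e. $w e^{2w}=a^2$. Writing $z=2w$ gives $z e^{z}=2a^2$, which by definition of the Lambert $W$ function means $z=W(2a^2)$, so the optimal $w=W(2a^2)/2$ (and $w=0$ when $a=0$, consistent with $W(0)=0$). The clean simplification is that at the optimum the stationarity condition reads $a^2 e^{-2w}=w$, so the second term of $g$ equals $w$ and
\[
    g(a)=w^2+w=\frac{W(2a^2)^2}{4}+\frac{W(2a^2)}{2}=q(a),
\]
as desired. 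The only step requiring any care is this scalar optimization: recognizing that the first-order condition is exactly the defining equation of $W$, and then reusing that same condition to rewrite $a^2e^{-2w}$ as $w$ so that the objective collapses to $w^2+w$. Everything else is a routine decoupling-and-substitution argument; in particular, no convexity or differentiability assumption on $\mathcal L$ is needed for this identity.
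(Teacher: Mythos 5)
Your proof is correct and follows essentially the same route as the paper's: both reduce to the per-group scalar problem $\min_w \bigl(w^2 + a^2 e^{-2w}\bigr)$, solve the stationarity condition $w e^{2w} = a^2$ via the Lambert $W$ function to get $w = W(2a^2)/2$, and reuse that condition to collapse the objective to $w^2 + w = q(a)$. Your write-up is somewhat more careful than the paper's (explicit surjectivity of the reparametrization, strict convexity and coercivity guaranteeing a unique attained minimizer, and the $a=0$ edge case), but these are refinements of the identical argument rather than a different approach.
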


\subsubsection{\texorpdfstring{$\ell_1$}{l1}-regularized masks}

Next, we consider the idea of applying a sparsity-inducing regularization on a mask (see, e.g., the work of \citet{YYMYZGW2019}). We show that by regularizing the mask instead of the parameters themselves, the resulting optimization leads to a ``more nonconvex'' regularizer.

\begin{lemma}[$\ell_1$-regularized masks as $\ell_q$ regularization]
\label{lem:l1-reg-mask}
\begin{align*}
    \min_{\bfw\in\mathbb R^t, \bfbeta\in\mathbb R^n} \mathcal L(\{\bfw_i\bfbeta\vert_{T_i}\}_{i=1}^t) + \lambda \parens*{\norm{\bfw}_1 + \norm{\bfbeta}_2^2} = \min_{\bfu\in\mathbb R^n} \mathcal L(\bfu) + \frac32 2^{1/3}\lambda \sum_{i=1}^t \norm{\bfu\vert_{T_i}}_2^{2/3}
\end{align*}
\end{lemma}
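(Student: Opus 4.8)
The plan is to eliminate the auxiliary variables $\bfw$ and $\bfbeta$ in favor of the product $\bfu\vert_{T_i} := \bfw_i\,\bfbeta\vert_{T_i}$, exactly as in the proof of Lemma \ref{lem:unnormalized-softmax}. Since $\mathcal L$ depends on $(\bfw,\bfbeta)$ only through the concatenation $\{\bfw_i\bfbeta\vert_{T_i}\}_{i=1}^t$, I would first exchange the order of minimization: for each target $\bfu\in\mathbb R^n$, minimize the regularizer $\lambda\parens*{\norm{\bfw}_1 + \norm{\bfbeta}_2^2}$ over all factorizations $(\bfw,\bfbeta)$ with $\bfw_i\bfbeta\vert_{T_i} = \bfu\vert_{T_i}$ for every $i$. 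This exchange is valid because every coordinate map $(\bfw_i,\bfbeta\vert_{T_i})\mapsto\bfw_i\bfbeta\vert_{T_i}$ is surjective onto $\mathbb R^{\abs{T_i}}$ (take $\bfw_i=1$), so $\bfu$ ranges over all of $\mathbb R^n$. Because $\norm{\bfw}_1 = \sum_i \abs{\bfw_i}$ and $\norm{\bfbeta}_2^2 = \sum_i \norm{\bfbeta\vert_{T_i}}_2^2$ both decompose over groups, and the constraint is per-group, the outer minimization splits into $t$ independent scalar-plus-vector subproblems.

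Next I would solve the $i$-th subproblem. Writing $a := \norm{\bfu\vert_{T_i}}_2$, the constraint $\bfw_i\bfbeta\vert_{T_i} = \bfu\vert_{T_i}$ forces $\bfbeta\vert_{T_i} = \bfu\vert_{T_i}/\bfw_i$ whenever $\bfw_i\neq 0$ (and forces $\bfu\vert_{T_i}=0$ when $\bfw_i=0$), so the only free quantities are the magnitudes $s := \abs{\bfw_i}$ and $r := \norm{\bfbeta\vert_{T_i}}_2$, subject to $sr = a$ with $s,r\ge 0$. The subproblem thus reduces to $\min_{sr=a}\lambda(s + r^2)$. The edge case $a=0$ is immediate: take $s=r=0$, contributing $0 = \tfrac32 2^{1/3}\lambda\,a^{2/3}$.

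For $a>0$, I would substitute $s = a/r$ and minimize $g(r) = a/r + r^2$ over $r>0$. The clean route is weighted AM--GM: splitting $a/r = \tfrac{a}{2r} + \tfrac{a}{2r}$ gives
\[
    \frac{a}{2r} + \frac{a}{2r} + r^2 \;\ge\; 3\sqrt[3]{\frac{a}{2r}\cdot\frac{a}{2r}\cdot r^2} \;=\; 3\sqrt[3]{\frac{a^2}{4}} \;=\; \tfrac32 2^{1/3}\,a^{2/3},
\]
with equality iff $\tfrac{a}{2r} = r^2$, i.e.\ $r = (a/2)^{1/3}$ and $s = 2^{1/3}a^{2/3}$ (equivalently, one sets $g'(r) = -a/r^2 + 2r = 0$). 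Hence the $i$-th subproblem has optimal value $\tfrac32 2^{1/3}\lambda\norm{\bfu\vert_{T_i}}_2^{2/3}$. Summing over $i$ and reinserting the outer $\min_{\bfu}\mathcal L(\bfu)$ yields the right-hand side.

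I expect no serious obstacle: the argument is essentially a group-wise AM--GM computation. The only points requiring care are the bookkeeping of the exchange-of-minima step (justified by the surjectivity noted above) and the $a=0$ boundary case; the exponent $2/3$ and the constant $\tfrac32 2^{1/3}$ both fall out of the single-variable optimization.
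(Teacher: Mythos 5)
Your proposal is correct and follows essentially the same route as the paper's proof: both eliminate $(\bfw,\bfbeta)$ group-wise via the substitution $\bfu\vert_{T_i} = \bfw_i\bfbeta\vert_{T_i}$ and reduce to the single-variable minimization of $a/r + r^2$ (the paper writes it as $w + a^2/w^2$, the same function under $w = a/r$), yielding the constant $\tfrac32 2^{1/3}$ and exponent $2/3$. Your AM--GM verification of the scalar minimum and your explicit treatment of the exchange-of-minima step and the $a=0$ case are just more careful renderings of what the paper does by setting a derivative to zero.
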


\subsubsection{Powerpropagation}

Finally, we study differentiable pruning techniques that use the network weights themselves as importance scores. The most straightforward implementation of this idea is to square each of the weights, as explored in works such as powerpropagation for neural networks \citep{SJPLT2021}, but more complex versions have also been considered \citep{CAN2023}. We show how these techniques can be generalized to handle the group setting, and show how they can also be interpreted as an implementation of group sparsity-inducing regularization.

\begin{lemma}[Group powerpropagation as Group LASSO]
\label{lem:powerprop}
\begin{align*}
    \min_{\bfw\in\mathbb R^t, \bfbeta\in\mathbb R^n} \mathcal L(\{\norm{\bfbeta\vert_{T_i}}_2\bfbeta\vert_{T_i}\}_{i=1}^t) + \lambda \norm{\bfbeta}_2^2 = \min_{\bfu\in\mathbb R^n} \mathcal L(\bfu) + \lambda \sum_{i=1}^t \norm{\bfu\vert_{T_i}}_2
\end{align*}
\end{lemma}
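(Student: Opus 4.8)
The plan is to prove the identity by a direct change of variables, exploiting the fact that the ``self-masking'' operation acts on each block $T_i$ independently and is a bijection on that block. First I would introduce the reparametrization $\bfu\vert_{T_i} := \norm{\bfbeta\vert_{T_i}}_2\,\bfbeta\vert_{T_i}$ for each $i\in[t]$, so that the vector fed into $\mathcal L$ on the left-hand side is precisely $\bfu$. The key computation is that scaling $\bfbeta\vert_{T_i}$ by the nonnegative scalar $\norm{\bfbeta\vert_{T_i}}_2$ multiplies its norm by that same scalar, giving $\norm{\bfu\vert_{T_i}}_2 = \norm{\bfbeta\vert_{T_i}}_2\cdot\norm{\bfbeta\vert_{T_i}}_2 = \norm{\bfbeta\vert_{T_i}}_2^2$. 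Summing over blocks then yields $\sum_{i=1}^t \norm{\bfu\vert_{T_i}}_2 = \sum_{i=1}^t \norm{\bfbeta\vert_{T_i}}_2^2 = \norm{\bfbeta}_2^2$, so the left-hand objective $\mathcal L(\{\norm{\bfbeta\vert_{T_i}}_2\,\bfbeta\vert_{T_i}\}_{i=1}^t) + \lambda\norm{\bfbeta}_2^2$ equals $\mathcal L(\bfu) + \lambda\sum_{i=1}^t \norm{\bfu\vert_{T_i}}_2$ identically, i.e.\ it is exactly the right-hand objective evaluated at the image point $\bfu$.

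It then remains to show that as $\bfbeta$ ranges over $\mathbb R^n$, the induced $\bfu$ ranges over all of $\mathbb R^n$, so that the two minimizations are taken over the same feasible set and hence agree. Since the reparametrization acts blockwise, the full map factors as a product of the blockwise maps, and it suffices to argue that $\phi(v) = \norm{v}_2\,v$ is a bijection of $\mathbb R^{\abs{T_i}}$. (The outer minimization over $\bfw$ in the statement is vacuous, as $\bfw$ does not appear in the objective, so it collapses to a minimization over $\bfbeta$ alone, matching the form stated in the introduction.)

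I would then verify the bijectivity of $\phi$ explicitly, which is the one step requiring a short argument. For injectivity, if $\phi(v_1)=\phi(v_2)$, taking norms gives $\norm{v_1}_2^2 = \norm{v_2}_2^2$, hence $\norm{v_1}_2 = \norm{v_2}_2$, and then the defining relation $\norm{v}_2 v$ forces $v_1 = v_2$ (the case $v_1=v_2=0$ being immediate). For surjectivity, given $u\neq 0$ the preimage is $v = u/\sqrt{\norm{u}_2}$, which satisfies $\norm{v}_2 = \sqrt{\norm{u}_2}$ and therefore $\phi(v) = \norm{v}_2\,v = u$, while $u=0$ has preimage $0$. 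Thus $\phi$ is a bijection, the two feasible sets coincide under the change of variables, and the two minimum values are equal. The main obstacle here is only this well-definedness and surjectivity check at and near the origin, where one must confirm that the square-root rescaling inverts the self-masking cleanly; everywhere else the argument is the routine norm identity above.
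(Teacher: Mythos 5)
Your proof is correct and takes essentially the same route as the paper's: the blockwise substitution $\bfu\vert_{T_i} = \norm{\bfbeta\vert_{T_i}}_2\,\bfbeta\vert_{T_i}$ together with the identity $\norm{\bfu\vert_{T_i}}_2 = \norm{\bfbeta\vert_{T_i}}_2^2$, summed over groups. The only difference is that you explicitly verify the bijectivity of $v\mapsto\norm{v}_2\,v$ (via the inverse $u\mapsto u/\sqrt{\norm{u}_2}$), a step the paper's one-line proof leaves implicit, so your writeup is if anything slightly more complete than the original.
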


\subsection{Unique sparse global minima}
\label{sec:unique-sparse-global-minima}

We will prove the following theorem in this section, which establishes natural conditions for which nonconvex regularization of a convex function produces a unique group-sparse global minimum. As discussed in Section \ref{sec:intro:unique-sparse-global-minima}, this theorem is the main crucial result for proving that local search algorithms give provable guarantees for sparse convex optimization.

\UniqueSparseGlobalMinima*

We have the following lemma that shows that if $q$ is strictly increasing and subadditive, then the group $q$-regularization is always larger than group LASSO regularization. Thus, the group LASSO objective is always a lower bound on the $q$-regularized objective.

\begin{lemma}
\label{lem:subadditivity}
Let $q:\mathbb R_+\to\mathbb R_+$ be strictly increasing and subadditive. Then,
\[
    \sum_{i=1}^t \norm{\bfbeta\vert_{T_i}}_2 \leq q^{-1}\parens*{\sum_{i=1}^t q(\norm{\bfbeta\vert_{T_i}}_2)}
\]
\end{lemma}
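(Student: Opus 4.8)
The plan is to strip away the norm/block structure, which plays no role, and reduce the claim to the $t$-term generalization of the two-term subadditivity hypothesis, then transfer the bound through $q^{-1}$. Write $a_i = \norm{\bfbeta\vert_{T_i}}_2 \geq 0$ for brevity, so the target inequality reads $\sum_{i=1}^t a_i \leq q^{-1}\parens*{\sum_{i=1}^t q(a_i)}$; note that only the nonnegativity of the $a_i$ is used, so nothing depends on the $a_i$ being $\ell_2$ norms of groups. Since $q$ is strictly increasing on $\mathbb R_+$, it is injective, so $q^{-1}$ is well-defined and strictly increasing on the range of $q$. I would therefore first establish the multi-term subadditivity bound $q\parens*{\sum_{i=1}^t a_i} \leq \sum_{i=1}^t q(a_i)$, and then apply the order-preserving map $q^{-1}$ to both sides, using $q^{-1}\parens*{q(s)} = s$ with $s = \sum_{i=1}^t a_i$ to recover the left-hand side of the claim.

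The second step is to prove this $t$-term subadditivity by induction on $t$. The base case $t=1$ is an equality. For the inductive step I would split off the final summand and chain the two-term hypothesis together with monotonicity of $q$ and the inductive hypothesis:
\[
    q\parens*{\sum_{i=1}^t a_i} = q\parens*{\sum_{i=1}^{t-1} a_i + a_t} \leq q\parens*{\sum_{i=1}^{t-1} a_i} + q(a_t) \leq \sum_{i=1}^{t-1} q(a_i) + q(a_t) = \sum_{i=1}^t q(a_i),
\]
where the first inequality is the given two-term subadditivity applied with $a = \sum_{i=1}^{t-1} a_i$ and $b = a_t$, and the second is the inductive hypothesis. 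Applying the strictly increasing $q^{-1}$ to $q\parens*{\sum_{i=1}^t a_i} \leq \sum_{i=1}^t q(a_i)$ then yields $\sum_{i=1}^t a_i \leq q^{-1}\parens*{\sum_{i=1}^t q(a_i)}$, completing the argument.

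The only point needing care — and this is a well-definedness remark rather than a genuine obstacle — is that transferring the inequality through $q^{-1}$ requires the argument $\sum_{i=1}^t q(a_i)$ to lie in the range of $q$, so that $q^{-1}$ is monotone there. For every regularizer of interest in this paper (for instance $q(x)=\abs{x}$, $q(x)=\abs{x}^p$ with $p<1$, and $q(x)=\log(1+x)$) the range of $q$ is all of $\mathbb R_+$, so $q^{-1}$ is defined on all of $\mathbb R_+$ and the transfer is immediate; more generally the statement is understood on the domain where $q^{-1}\parens*{\sum_i q(a_i)}$ is meaningful, which is exactly the regime in which the $q$-regularized objective \eqref{eq:q-reg} is used. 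I expect no further difficulties, as the remaining content is a direct induction once the reduction via monotonicity of $q$ and $q^{-1}$ is made.
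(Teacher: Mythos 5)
Your proof is correct and follows essentially the same route as the paper's: extend the two-term subadditivity to $t$ terms (the paper does this implicitly, you make the induction explicit) and then apply the strictly increasing $q^{-1}$ to both sides of $q\parens*{\sum_{i=1}^t a_i} \leq \sum_{i=1}^t q(a_i)$. Your added remark about $\sum_i q(a_i)$ lying in the range of $q$ is a reasonable point of care that the paper's one-line proof leaves implicit, but it does not change the argument.
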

\begin{proof}
Since $q$ is invertible, applying the subadditivity condition on $q(\sum_{i=1}^t \norm{\bfbeta\vert_{T_i}}_2)$ and then applying $q^{-1}$ on both sides of the inequality yields the result.
\end{proof}

Furthermore, note that for solutions $\bfbeta$ that have group sparsity at most $1$, the group $q$-regularization has the same value as the group LASSO regularization. That is, the lower bound value can be achieved when the group sparsity is at most $1$.

\begin{lemma}
\label{lem:one-sparse}
Let $q:\mathbb R_+\to\mathbb R_+$ be strictly increasing and satisfy $q(0) = 0$. Then, for any $\bfbeta\in\mathbb R^n$ with group sparsity $1$,
\[
    \sum_{i=1}^t \norm{\bfbeta\vert_{T_i}}_2 = q^{-1}\parens*{\sum_{i=1}^t q(\norm{\bfbeta\vert_{T_i}}_2)}.
\]
\end{lemma}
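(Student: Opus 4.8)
The plan is to exploit the definition of group sparsity together with the two hypotheses $q(0)=0$ and strict monotonicity, thereby reducing the claimed identity to a single application of $q^{-1}\circ q = \mathrm{id}$. Since $\bfbeta$ has group sparsity at most $1$, by definition at most one of the restrictions $\bfbeta\vert_{T_i}$ is nonzero; every other restriction is the zero vector and hence satisfies $\norm{\bfbeta\vert_{T_i}}_2 = 0$. I would therefore let $j$ denote the (unique, if it exists) index with $\bfbeta\vert_{T_j}\neq 0$, and set $a := \norm{\bfbeta\vert_{T_j}}_2 \geq 0$, with the convention $a = 0$ in the degenerate case $\bfbeta = 0$. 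The left-hand side then collapses to a single surviving term, $\sum_{i=1}^t \norm{\bfbeta\vert_{T_i}}_2 = a$.

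For the right-hand side, I would first record that strict monotonicity makes $q$ injective on $\mathbb R_+$, so it admits a well-defined inverse $q^{-1}$ on its range and the defining relation $q^{-1}(q(x)) = x$ holds for every $x\in\mathbb R_+$. Next, because $q(0)=0$, each vanishing group contributes $q(\norm{\bfbeta\vert_{T_i}}_2) = q(0) = 0$ to the inner sum, leaving $\sum_{i=1}^t q(\norm{\bfbeta\vert_{T_i}}_2) = q(a)$. Applying $q^{-1}$ to this single term yields $q^{-1}(q(a)) = a$, which matches the left-hand side and completes the identity.

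The only point requiring minor care is the fully sparse case $\bfbeta = 0$, where one needs $q^{-1}(0) = 0$; this follows since $q(0) = 0$ together with injectivity forces $0$ to be the unique preimage of $0$ under $q$. Unifying the nonzero and zero cases under the convention $a = 0$ when $\bfbeta = 0$ makes both sides equal $a$ simultaneously, so no genuine obstacle arises. I would finally remark that subadditivity is \emph{not} invoked here — it enters only in the inequality direction of Lemma \ref{lem:subadditivity} — so the hypotheses actually consumed by this lemma are precisely strict monotonicity and $q(0)=0$, both of which are already assumed in the statement.
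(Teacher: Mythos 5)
Your proposal is correct and follows the same route as the paper's proof: both collapse the inner sum to the single term $q(\norm{\bfbeta\vert_{T_j}}_2)$ using $q(0)=0$ and then cancel $q^{-1}\circ q$ via injectivity from strict monotonicity. Your treatment is merely slightly more explicit than the paper's, spelling out the degenerate case $\bfbeta=0$ and noting that subadditivity is not consumed here, neither of which changes the substance of the argument.
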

\begin{proof}
If $\bfbeta$ has group sparsity at most $1$, say supported on $T_j$ for some $j\in[t]$, then we have
\[
    q^{-1}\parens*{\sum_{i=1}^t q(\norm{\bfbeta\vert_{T_i}}_2)} = q^{-1}\parens*{ q(\norm{\bfbeta\vert_{T_j}}_2)} = \norm{\bfbeta\vert_{T_j}}_2.\qedhere
\]
\end{proof}

Together, Lemmas \ref{lem:subadditivity} and \ref{lem:one-sparse} imply that if the group LASSO objective has a unique sparse minimum, then this is a lower bound on the optimal value that can be achieved by the $q$-regularized objective. This proves Theorem \ref{thm:q-reg}. The formal argument can be found in Appendix \ref{sec:theory-proofs}.

\section{The SequentialAttention++ algorithm}
\label{sec:algorithm}

Weight magnitude is a simple and reliable importance score used to
prune candidates (in our case, blocks) in a sparse optimization problem. In many cases, however, the magnitudes do not correlate
very well with the true importances of the candidates. This has 
been observed e.g. in~\cite{AS21,AS22}, who showed that the
magnitude pruning criterion used in the IHT algorithm is provably suboptimal even for simple sparse regression tasks, and proposed
an adaptive weight decay to deal with this issue. One reason for
the suboptimality of magnitude pruning is that 
the weights are not encouraged to be sparse during model training,
leading to redundancy.
Methods such as Powerpropagation~\citep{SJPLT2021} and Sequential Attention~\citep{YBCFFM2023} have been proposed to address this issue by explicitly encoding a non-convexity that encourages weights to be concentrated on a sparse subset (this can be viewed
as weight re-parameterization or concave regularization,
as shown in
Section~\ref{sec:theory}).

To test the hypothesis that softmax attention weights are
higher-quality importance scores, we
consider one-shot block pruning based on the
softmax attention scores used in Sequential Attention
(see Figure~\ref{fig:differentiable-pruning} on how to apply it
to blocks), and we compare it with block magnitude (Frobenius norm) pruning. The results in Figure~\ref{table:oneshot} suggest that
softmax attention scores are generally more reliable as block
importance scores, especially for larger block sizes. This leads
us to adopt the softmax parameterization in our algorithm.

As observed e.g. in \cite{PIVA2021}, one-shot pruning approaches
are significantly suboptimal compared to iterative pruning 
approaches such as ACDC. We use a similar alternating compressed and decompressed phases 
approach as ACDC, but we apply it \emph{on the softmax attention weights} instead of the block magnitudes. This establishes
SequentialAttention++ as a combination between Sequential Attention
and ACDC.
The basic algorithm can be seen in Algorithm~\ref{alg:example}.

\begin{algorithm}[H]
   \caption{Feed-forward layer with the basic version of SequentialAttention++ to select top $k$ parameters from a kernel $\bfW$.}
   \label{alg:example}
\begin{algorithmic}
\FUNCTION{$\text{Mask}({\bf A}:\text{attention weights}, t:\text{training step})$}
    \STATE Non-trainable state: $\text{mask}\in\{0,1\}^{n\times m}$
    \IF{$t$ is in a $\textsc{dense}$ phase}
        \STATE $\text{mask} \leftarrow \text{top}_k({\bf A})$
        \STATE {\bf return} ${\bf 1}_{n\times m}$
    \ELSIF{$t$ is in a $\textsc{sparse}$ phase}
        \STATE {\bf return} $\text{mask}$
    \ENDIF
\ENDFUNCTION
\FUNCTION{$\text{FeedForward}({{\bf X}\in \mathbb{R}^{b\times n}}: \text{input batch}, t:\text{training step})$}
   \STATE Trainable params:
   \STATE Kernel ${\bf W}\in\mathbb{R}^{n\times m}$, Logits ${\bf L}\in\mathbb{R}^{n\times m}$
   \STATE ${\bf A} = nm \cdot e^{\bf L}/\sum e^{\bf L}$
   \STATE ${\bf \hat{W}} = {\bf W} \odot {\bf A} \odot \text{Mask}({\bf A}, t)$
   \STATE {\bf return} ${\bf X} {\bf \hat{W}}$
\ENDFUNCTION
\end{algorithmic}
\end{algorithm}

\subsection{The \textsc{sparsification} phase}
\label{sec:sparsification}

One drawback of sparse/dense (compression/decompression) phases is that the dense-to-sparse
transition is abrupt. Since the lowest-magnitude weights
are instantly pruned, this neglects correlations between
these pruned parameters. If we were to re-train
the model after pruning one parameter at a time, the picture
could be drastically different, since low-magnitude weights
could grow (this could happen e.g. due to parameter 
redundancy). In fact, this effect was highlighted by \citet{kuznedelev2023cap}, who devised a backward selection 
method based on correlations as captured by the Hessian.

Inspired by this approach, we incorporate a backward selection
phase between the $\textsc{dense}$ and $\textsc{sparse}$ phases,
which we call the $\textsc{sparsification}$ phase. In this 
phase, we gradually prune the least important features based on
the attention weights. This gradual process allows the model
to re-adjust the attention weights after some parameters are
pruned. The importance of this phase is validated by ablation experiments in
Appendix~\ref{sec:ablation_sparsification}.
We use an exponential pruning schedule, to prune more
aggressively in the beginning of the phase, and more carefully
at the end (as we approach the desired number of candidates 
$k$). A comparison of the sparsity schedules of ACDC and SequentialAttention++ can be found in Figure \ref{fig:sparse_step}. We use the sparsity schedule
$\text{sparsity}(t) = s \cdot \frac{1-e^{-ct}}{1-e^{-c}}$ for $t\in[0,1]$, where $s$ is the target sparsity. This interpolates
between sparsity $0$ and $s$, and constitutes a single
$\textsc{sparsification}$ phase. We choose the constant $c=4$ (for an ablation 
analysis, see Appendix~\ref{sec:ablation_rampup_factor}).

\begin{figure}[tb]
    \centering
\begin{subfigure}{0.59\textwidth}
    \centering
        \begin{small}
        \begin{sc}
        \tabcolsep=0.11cm
        \begin{tabular}{cccc|cccr}
        \toprule
        \multicolumn{8}{c}{Validation Accuracy}\\
        \multicolumn{4}{c|}{Block size: $8\times 8$} & \multicolumn{4}{c}{Block size: $16\times 16$} \\
        $70\%$ & $80\%$ & $90\%$ & $95\%$ & $70\%$ & $80\%$ & $90\%$ & $95\%$ \\
        $-0.12$ & $-0.11$ & $+0.10$ & --- & $+0.19$ & $+0.13$ & $-0.21$ & $-1.33$ \\
        
        \midrule
        \multicolumn{4}{c|}{Block size: $32\times 32$} & \multicolumn{4}{c}{Block size: $64\times 64$} \\
        
        $68\%$&
        $78\%$&
        $88\%$&
        $92\%$& 
        $58\%$&
        $66\%$&
        $74\%$&
        $79\%$ \\
        $+0.32$ & $+0.58$ & $+0.71$ & $+3.17$ &
        $+2.54$ & $+2.81$ & $+2.85$ & $+5.54$ \\
        \bottomrule
        \end{tabular}
        \end{sc}
        \end{small}

\caption{Quantifying the effectiveness of magnitude vs softmax
attention as block importance scores (ResNet50 on ImageNet). For different block sizes and sparsities, we show the difference between the validation accuracy (in percentage points) of 
a model pruned one-shot based on the softmax attention scores minus
one pruned based on the block magnitudes (Frobenius norms). We train dense models for the first half of training, prune once, and then continue to train the remaining blocks for the second half of training. The experimental setup is as in 
Section~\ref{sec:experiments}.}
\label{table:oneshot}
\end{subfigure}
\hfill
\begin{subfigure}{0.4\textwidth}
    \centering
    \includegraphics[width=\linewidth]{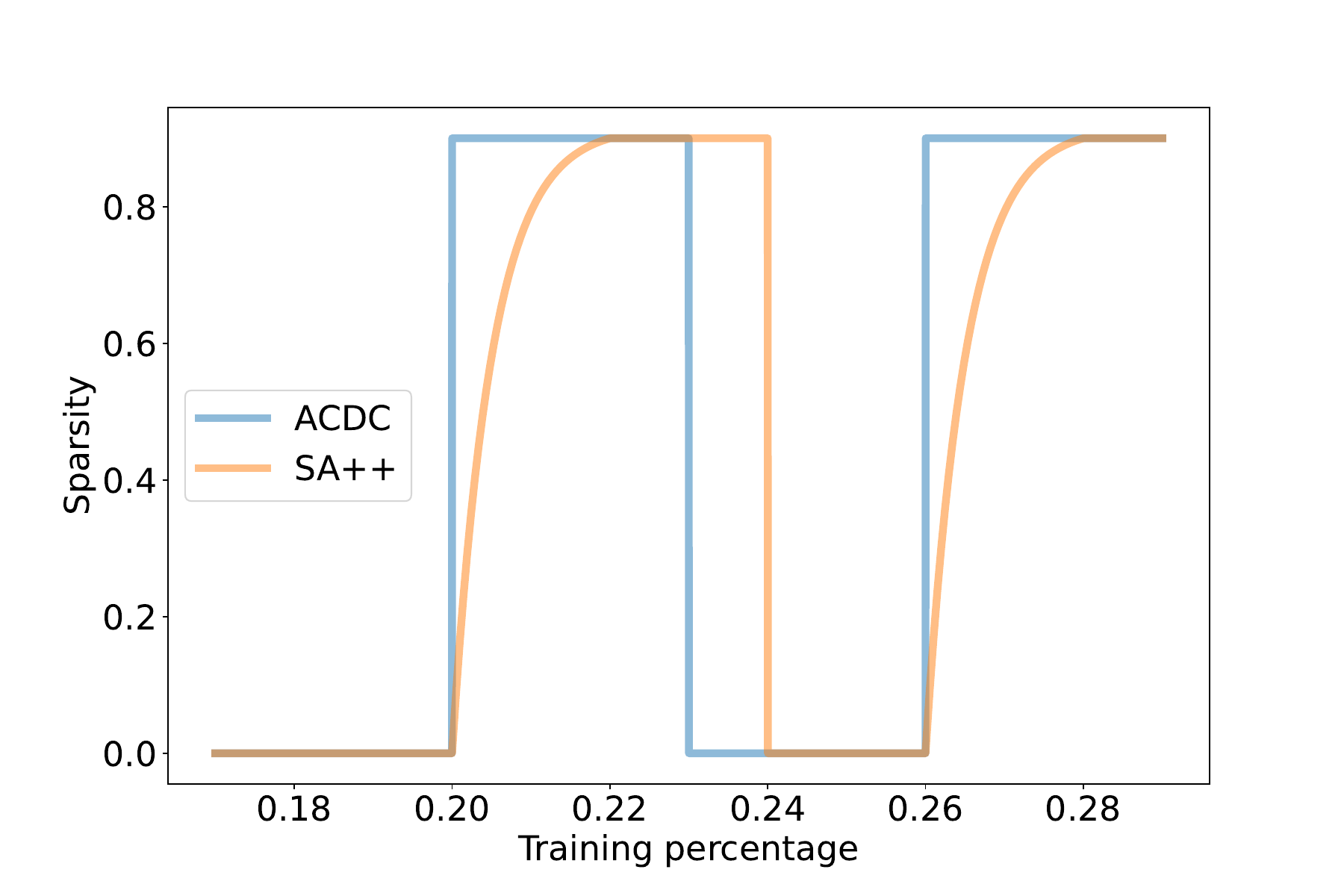}
    
\caption{Sparsity schedules of ACDC and SequentialAttention++. ACDC uses an instant dense-to-sparse
    transition, while SequentialAttention++ uses an
    exponential sparsity schedule.}
\label{fig:sparse_step}
\end{subfigure}
\caption{(a) Softmax attention vs magnitude pruning, and (b) the
\textsc{Sparsification} phase.}
\end{figure}

\section{Experiments}
\label{sec:experiments}

We evaluate our algorithms on sparsification tasks where a dense DNN is approximated by block-sparse counterparts, at various block sizes $B$ and sparsities $p$, where a sparsity $p$ indicates that the DNN layer will only have a $1-p$ fraction of nonzero entries, and a block size of $B$ indicates that the nonzero entries are arranged in $B\times B$ blocks. Note that for a fixed sparsity, larger block sizes generally translate to improved efficiency due to improved hardware utilization, but also degrades quality.
Block size of $1$ corresponds to unstructured pruning.
Our experiments are performed on the ImageNet and Criteo datasets.
More details on the setup can be found in Section~\ref{app:setup}.

\subsection{Baseline algorithms}

We compare our SequentialAttention++ algorithm to three other representative prior algorithms for DNN pruning. The first is basic magnitude pruning, which is a popular and effective algorithm where the weights are sparsified by keeping the weights with the largest magnitude after training \citep{FC2019}. We use it in the block setting by keeping the largest blocks in Frobenius norm.
The second algorithm is a block generalization of Powerpropagation \citep{SJPLT2021}, which combines magnitude pruning with a differentiable pruning technique where sparsity is encouraged by squaring the weights. While the original Powerpropagation algorithm did not handle the block sparsification setting, we show that multiplying each block by the Frobenius norm leads to a provable generalization (see Lemma \ref{lem:powerprop}). Finally, we consider ACDC \citep{PIVA2021}, which is an adaptation of iterative hard
thresholding (IHT) \citep{blumensath2009iterative} to the setting of neural network sparsification, and has produced the state-of-the-art
pruning results for ImageNet \citep{kuznedelev2023accurate}. For all algorithms and datasets, we include a fine-tuning phase at the end
of training, using the pruned model, and evaluate the final pruned
model on the test set.

\subsection{Results}

Our results on ImageNet are summarized in Table~\ref{table:imagenet}. The sparsities range over
$58$-$95\%$ and the block sizes over $8,16,32,64$.
We compare ACDC and SequentialAttention++. Our ACDC implementation closely follows the
implementation in \citet{PIVA2021}\footnote{We sanity-checked our ACDC implementation by verifying that the accuracy
of $90\%$ unstructured global pruning matches that of the ACDC 
paper ($75.01$ vs $75.03$).}. We use the
phase schedule suggested by \citet{kuznedelev2023accurate}
(10\% dense, 7 equal \textsc{sparse}-\textsc{dense} phases where the last dense phase
is extended by 5\%, 15\% sparse). For SequentialAttention++, we
additionally replace each sparse-dense phase by a 
$\textsc{sparsification}$-$\textsc{sparse}$-$\textsc{dense}$ phase, as described in 
Section~\ref{sec:sparsification}, and we replace the last of the
7 phases (including its extension) by a $\textsc{sparsification}$
phase. We use a batch size of $2048$ and a maximum learning rate 
of $0.8$.

We observe
that SequentialAttention++ generally outperforms ACDC on the block
sparsification task, across all different block sizes and sparsities
that we tested.
It should be mentioned that this comes at the cost of introducing
additional trainable parameters to the model (one parameter per
block). This overhead could be concerning in some applications
if block size is too small (e.g., $1$), in which case the model's parameters are being doubled. However, the overhead is negligible
for larger (e.g., $\geq 8$) block sizes.

\begin{table*}
\caption{Block sparsification of ResNet50 on ImageNet. Our dense baseline validation accuracy is $76.90$. The dashes are
results where the algorithms diverged because of extreme sparsity. The sparsities
where chosen as $70\%$, $80\%$, $90\%$, $95\%$. 
As seen in the table, for larger block sizes the real sparsity is lower
because we are only sparsifying layers with at least
$100$ blocks.}
\label{table:imagenet}
\vskip 0.15in
\begin{center}
\begin{small}
\begin{sc}
\resizebox{0.8\columnwidth}{!}{%
\begin{tabular}{l|cccc|cccr}
\toprule
 & \multicolumn{8}{c}{Validation Accuracy}\\
\midrule
\midrule
& \multicolumn{4}{c|}{Block size: $8\times 8$} & \multicolumn{4}{c}{Block size: $16\times 16$} \\
Sparsity: & $70\%$ & $80\%$ & $90\%$ & $95\%$ & $70\%$ & $80\%$ & $90\%$ & $95\%$ \\
\rule{0pt}{4ex}
ACDC & $74.11$ &$72.47$ & $67.74$ & --- & 	$74.08$ &	$72.56$  & $68.61$ & $61.42$\\
SequentialAttention++ (ours) & ${\bf 74.14}$	 &${\bf 72.90}$  &${\bf 69.56}$	 & --- & ${\bf 74.40}$ &${\bf 73.50}$& ${\bf 69.92}$ & ${\bf 64.27}$\\
\midrule
& \multicolumn{4}{c|}{Block size: $32\times 32$} & \multicolumn{4}{c}{Block size: $64\times 64$} \\
Sparsity: & 
$68\%$&
$78\%$&
$88\%$&
$92\%$& 
$58\%$&
$66\%$&
$74\%$&
$79\%$ \\
\rule{0pt}{4ex}
ACDC & $74.40$ &	$72.39$ & $68.96$ & $63.03$& $75.18$ & $74.49$& $71.95$ & 	$67.36$ \\
SequentialAttention++ (ours) &${\bf 74.82}$ &${\bf 73.78}$ & ${\bf 70.82}$ & ${\bf 65.41}$&${\bf 75.53}$ & ${\bf 74.52}$& ${\bf 72.76}$& ${\bf 70.30}$\\
\bottomrule
\end{tabular}
}
\end{sc}
\end{small}
\end{center}
\vskip -0.1in
\end{table*}

\begin{figure*}
    \centering
    \includegraphics[width=0.99\linewidth]{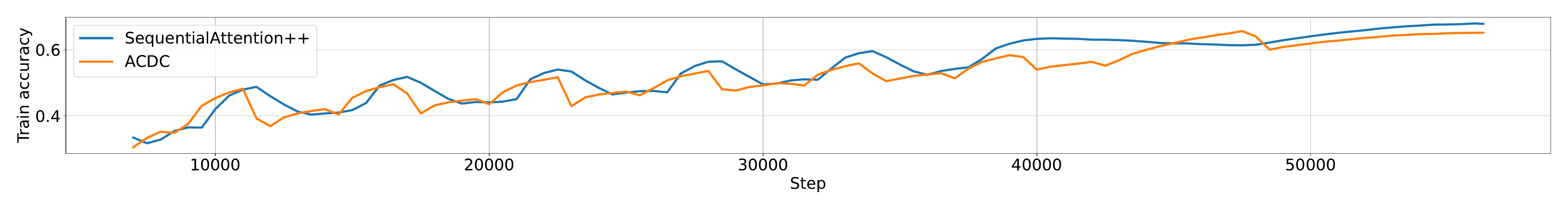}
    \caption{Training accuracy vs step on ImageNet: Comparison
    between ACDC and SequentialAttention++. The setting is $90\%$
    sparsity and $32\times 32$-size blocks.}
\end{figure*}

Our results on the Criteo dataset are presented in Table~\ref{table:criteo}. The sparsities
range over $p\in\{90\%, 95\%, 97\%, 98\%, 99\%\}$ and block sizes over $B\in\{5, 10, 20\}$. In this experiment, we used a schedule
of 10 sparse-dense phases, in addition to a 20\% initial dense phase
and a final 20\% sparse phase. Note that for this experiment, we used 
masking instead of pruning for ACDC, meaning that unselected blocks
are not pruned but multiplied with an all-zero mask. We observe
that SequentialAttention++ is the best performing algorithm. In fact, we notice that the gap widens with large block sizes and high sparsity, suggesting that SequentialAttention++ is a highly accurate
block sparsification algorithm for large block sizes and extreme
sparsities.

\section{Conclusion}
\label{sec:conclusion}

In this work, we unified, generalized, and improved prior approaches to neural network pruning via a framework which combines differentiable pruning with combinatorial optimization algorithms, in particular local search techniques. Theoretically, we gave a unified analysis of a wide class of existing techniques via a connection to nonconvex regularization, and proved novel properties about sparse convex optimization with nonconvex regularization. In particular, we established natural conditions under which nonconvex regularization yields a unique group-sparse global minimum that is supported on the group that maximizes the $\ell_2$ norm of the gradient, thus yielding provable guarantees for group sparse convex optimization. Empirically, we proposed a novel algorithm, SequentialAttention++, which outperforms prior methods on standard benchmark datasets for neural network sparsification.

We conclude with a few open directions which we believe to be interesting for future work. The first is on characterizing the nature of critical points and local minima of nonconvex-regularized convex problems. This would be a more practically useful variation on our result, which only establishes provable guarantees for the global minimizer. For our second question, we ask whether one can theoretically establish that nonconvex regularization yields \emph{better} optimization guarantees than the LASSO. In our work, we have only shown that the quality of solutions found by nonconvex regularization can match the LASSO for a wide variety of nonconvex regularizers, but we do not theoretically establish that this formulation is better. It would be interesting to show, e.g., that nonconvex regularizers allow for faster convergence to the sparse global minimizer.

\bibliographystyle{plainnat}
\bibliography{example_paper}

\newpage
\appendix

\section{Missing proofs from Section \ref{sec:theory}}
\label{sec:theory-proofs}

\begin{proof}[Proof of Lemma \ref{lem:unnormalized-softmax}]
Note first that for a fixed $a > 0$, the function $w \mapsto w^2 + a^2 / \exp(2w)$ is minimized at $w$ satisfying $2w - 2 a^2 \exp(-2w) = 0$, that is, $w = W(2a^2) / 2$. Then, for each group $i\in[t]$, we can set $\bfu\vert_{T_i} = \exp(\bfw_i)\bfbeta\vert_{T_i}$ so
\[
    \bfw_i^2 + \norm{\bfbeta\vert_{T_i}}_2^2 = \bfw_i^2 + \frac{\norm{\bfu\vert_{T_i}}_2^2}{\exp(2\bfw_i)} \geq w^2 + w
\]
where $w = W(2\norm{\bfu\vert_{T_i}}_2^2)/2$. Summing over the groups $i\in[t]$ gives the desired result.
\end{proof}

\begin{proof}[Proof of Lemma \ref{lem:l1-reg-mask}]
Note first that for a fixed $a > 0$, the function $w \mapsto w + a^2 / w^2$ is minimized at $w$ satisfying $1 - 2 a^2 w^{-3} = 0$, that is, $w = 2^{1/3} a^{2/3}$. Then, for each group $i\in[t]$, we can set $\bfu\vert_{T_i} = \bfw_i\bfbeta\vert_{T_i}$ so
\[
    \abs{\bfw_i} + \norm{\bfbeta\vert_{T_i}}_2^2 = \abs{\bfw_i} + \frac{\norm{\bfu\vert_{T_i}}_2^2}{\bfw_i^2} \geq \frac32 w
\]
where $w = 2^{1/3} \norm{\bfu\vert_{T_i}}_2^{2/3}$. Summing over the groups $i\in[t]$ gives the desired result.
\end{proof}

\begin{proof}[Proof of Lemma \ref{lem:powerprop}]
Set $\bfu\vert_{T_i} = \norm{\bfbeta\vert_{T_i}}_2\bfbeta\vert_{T_i}$. Then,
\[
    \norm{\bfu\vert_{T_i}}_2 = \norm{\norm{\bfbeta\vert_{T_i}}_2\bfbeta\vert_{T_i}}_2 = \norm{\bfbeta\vert_{T_i}}_2^2
\]
so summing over the groups gives the claimed result.
\end{proof}

\subsection{Unique sparse global minima}

\begin{proof}[Proof of Theorem \ref{thm:q-reg}]
Suppose that the optimal group LASSO solution $\bfbeta^*$ of objective \eqref{eq:group-lasso} has group sparsity at most $1$. Then for any other solution $\bfbeta'$, we have that
\begin{align*}
    &\mathcal L(\bfbeta') + \lambda q^{-1}\parens*{\sum_{i=1}^t q(\norm{\bfbeta'\vert_{T_i}}_2)} \\
    \geq~&\mathcal L(\bfbeta') + \lambda \sum_{i=1}^t \norm{\bfbeta'\vert_{T_i}}_2 && \text{by Lemma \ref{lem:subadditivity}} \\
    >~&\mathcal L(\bfbeta^*) + \lambda \sum_{i=1}^t \norm{\bfbeta^*\vert_{T_i}}_2 && \text{by optimality} \\
    =~&\mathcal L(\bfbeta^*) + \lambda q^{-1}\parens*{\sum_{i=1}^t q(\norm{\bfbeta^*\vert_{T_i}}_2)} && \text{by Lemma \ref{lem:one-sparse}}.
\end{align*}
Thus, $\bfbeta^*$ must be the unique minimizer of \eqref{eq:q-reg}.
\end{proof}

\section{OMPR via nonconvex regularization}
\label{sec:ompr-proof}

We show that our results from Section \ref{sec:unique-sparse-global-minima} together with recent work of \citet{AY2023} give provable guarantees for a local search algorithm based on orthogonal matching pursuit with replacement using nonconvex regularization.

We first introduce some definitions needed to state our result.

\begin{definition}
Let $T_i\subseteq[n]$ for $i\in[t]$ form a partition of $[n]$. Then, we define
\[
    \norm{\bfbeta}_\group \coloneqq \abs*{\braces{i\in[t] : \bfbeta\vert_{T_i}\neq 0}}.
\]
\end{definition}

\begin{definition}[Restricted strong convexity and smoothness]
\label{def:rsc}
Let $\mathcal L:\mathbb R^n\to\mathbb R$ be differentiable. Let $T_i\subseteq[n]$ for $i\in[t]$ form a partition of $[n]$. Then, $l$ is \emph{$\mu_s$-restricted strongly convex at group sparsity $s$} if for any $\bfbeta\in\mathbb R^n$ and $\bfDelta\in\mathbb R^n$ with $\norm{\bfDelta}_\group \leq s$,
\[
    \mathcal L(\bfbeta + \bfDelta) - \mathcal L(\bfbeta) - \angle*{\nabla \mathcal L(\bfbeta), \bfDelta} \geq \frac{\mu_s}{2} \norm{\bfDelta}_2^2,
\]
and \emph{$L_s$-restricted smooth at group sparsity $s$} if for any $\bfbeta\in\mathbb R^n$ and $\bfDelta\in\mathbb R^n$ with $\norm{\bfDelta}_\group \leq s$,
\[
    \mathcal L(\bfbeta + \bfDelta) - \mathcal L(\bfbeta) - \angle*{\nabla \mathcal L(\bfbeta), \bfDelta} \leq \frac{L_s}{2}\norm{\bfDelta}_2^2.
\]
\end{definition}

We will now obtain provable guarantees for Algorithm \ref{alg:ompr-nonconvex} in Theorem \ref{thm:ompr-bicriteria}.

\begin{algorithm}
   \caption{OMPR via nonconvex regularization}
   \label{alg:ompr-nonconvex}
\begin{algorithmic}
    \STATE Initialize $S$ arbitrarily such that $\abs{S} = k'$
    \FOR{$i = 1, \dots, R$}
    \STATE Let \[\hat\bfbeta = \arg\min_{\bfbeta\in\mathbb R^n} \mathcal L(\bfbeta) + \lambda\cdot q^{-1}\parens*{\sum_{i\notin S} q(\norm{\bfbeta\vert_{T_i}}_2)}\]
    for $\lambda$ sufficiently large
    \STATE Let $i\notin S$ be the group maximizing $\hat\bfbeta\vert_{T_i}$ and $j\in S$ be the group minimizing $\norm{\bfbeta}_2\vert_{T_j}$
    \STATE $S\gets S \cup \{i\} \setminus \{j\}$
    \ENDFOR{}
\end{algorithmic}
\end{algorithm}

\begin{theorem}[OMPR via nonconvex regularization]
\label{thm:ompr-bicriteria}
Let $q:\mathbb R_+\to\mathbb R_+$ be strictly increasing, subadditive, and $0$ at the origin. After $R$ iterations of Algorithm \ref{alg:ompr-nonconvex} with $k'\geq k\parens*{\frac{L_2^2}{\mu_{k+k'}^2} + 1}$, for
\[
    R \geq 
    k\cdot \frac{L_2}{\mu_{k+k'}}\log\frac{\mathcal L(\bfbeta^{(0)}) - \mathcal L(\bfbeta^*)}{\eps},
\]
then $\hat\bfbeta$ has group sparsity $\norm{\bfbeta^\infty}_\group \leq k'$ and satisfies
\[
    \mathcal L(\bfbeta^\infty) \leq \mathcal L(\bfbeta^*) + \eps\,,
\]
where $\mu_{k+k'}$ is a lower bound on the restricted strong
convexity constant of $l$ at group sparsity $k+k'$ and $L_2$ is an upper bound on the restricted smoothness constant of $l$ at group sparsity $2$ (see Definition \ref{def:rsc}).
\end{theorem}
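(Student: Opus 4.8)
The plan is to show that Algorithm \ref{alg:ompr-nonconvex} is, step for step, an implementation of group orthogonal matching pursuit with replacement (group OMPR), and then to run the standard restricted-strong-convexity convergence analysis for that local search procedure. Throughout I write $\bfbeta^*$ for the optimal solution of group sparsity $k$ with group support $S^*$, $\bfbeta^{(t)}$ for the iterate re-optimized on the support $S$ after $t$ swaps, and I read the output $\hat\bfbeta = \bfbeta^\infty$ as $\bfbeta^{(R)}$.

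First I would identify the selection rule performed by the regularized subproblem. Since the groups in $S$ carry no penalty, the first-order conditions at the minimizer $\hat\bfbeta$ force $\nabla\mathcal L(\hat\bfbeta)\vert_{T_i}=0$ for all $i\in S$, so $\hat\bfbeta$ is automatically re-optimized over the free support $S$. To see which group outside $S$ is activated, I would pass to the partial minimization $\tilde{\mathcal L}(\bfbeta\vert_{\bar S})\coloneqq \min_{\bfbeta\vert_S}\mathcal L(\bfbeta)$, which is convex and, on the low-group-sparsity directions that matter, inherits restricted strong convexity from $\mathcal L$. The subproblem is then exactly the group $q$-regularized objective \eqref{eq:q-reg} for the loss $\tilde{\mathcal L}$ over the groups $\{T_i:i\notin S\}$. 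For $\lambda$ slightly below the threshold $\tau=\max_{i\notin S}\norm{\nabla\tilde{\mathcal L}(0)\vert_{T_i}}_2$, the results of \citet{AY2023} quoted in Section \ref{sec:intro:unique-sparse-global-minima} give that the corresponding group LASSO subproblem has a unique group-sparsity-$1$ minimizer supported on the maximizing group, and Theorem \ref{thm:q-reg} transfers this to the nonconvex subproblem. By the envelope theorem $\nabla\tilde{\mathcal L}(0)\vert_{T_i}=\nabla\mathcal L(\bfbeta^{(t)})\vert_{T_i}$ for $i\notin S$, so the activated group is precisely $\arg\max_{i\notin S}\norm{\nabla\mathcal L(\bfbeta^{(t)})\vert_{T_i}}_2$; hence the ``add the largest-norm group outside $S$, drop the smallest-norm group in $S$'' swap of Algorithm \ref{alg:ompr-nonconvex} coincides with the group OMPR swap.

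With the selection rule identified, I would run the per-swap progress argument. Let $\bfbeta=\bfbeta^{(t)}$ and $g=\max_{i\notin S}\norm{\nabla\mathcal L(\bfbeta)\vert_{T_i}}_2$. Using $\nabla\mathcal L(\bfbeta)\vert_S=0$, the fact that $\bfbeta-\bfbeta^*$ is supported on $S\cup S^*$ with $\abs{S^*\setminus S}\le k$, and Cauchy--Schwarz over those at most $k$ groups, restricted strong convexity at group sparsity $k+k'$ gives $\mathcal L(\bfbeta)-\mathcal L(\bfbeta^*)\le \sqrt{k}\,g\,\norm{\bfbeta-\bfbeta^*}_2-\tfrac{\mu_{k+k'}}{2}\norm{\bfbeta-\bfbeta^*}_2^2\le \tfrac{k g^2}{2\mu_{k+k'}}$, so $g^2\ge \tfrac{2\mu_{k+k'}}{k}\,(\mathcal L(\bfbeta)-\mathcal L(\bfbeta^*))$. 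Activating the maximizing group and re-optimizing decreases the objective by at least $g^2/(2L_2)$ by restricted smoothness (a single-group gradient step, bounded using $L_2\ge L_1$), i.e. by at least a $\tfrac{\mu_{k+k'}}{kL_2}$ fraction of the current optimality gap. Removing the smallest-norm group $j$ from the size-$(k'+1)$ support $S'=S\cup\{i\}$ then increases the objective by at most $\tfrac{L_2}{2}\norm{\bfbeta'\vert_{T_j}}_2^2$, where $\bfbeta'$ is the re-optimized iterate on $S'$ and the first-order term vanishes because $\nabla\mathcal L(\bfbeta')\vert_{S'}=0$.

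The crux is to make the removal cost at most half the addition gain, which is exactly where the bicriteria condition $k'\ge k(L_2^2/\mu_{k+k'}^2+1)$ enters. Since $\abs{S'\setminus S^*}\ge k'+1-k$ and $\bfbeta^*$ vanishes on every group of $S'\setminus S^*$, the smallest group norm satisfies $\min_j\norm{\bfbeta'\vert_{T_j}}_2^2\le \tfrac{\norm{\bfbeta'-\bfbeta^*}_2^2}{k'+1-k}\le \tfrac{2(\mathcal L(\bfbeta')-\mathcal L(\bfbeta^*))}{\mu_{k+k'}(k'+1-k)}\le \tfrac{k g^2}{\mu_{k+k'}^2(k'+1-k)}$ by restricted strong convexity and the gap bound above. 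Choosing $k'$ of the stated size forces the removal increase to be at most half of the addition decrease $g^2/(2L_2)$, so each swap nets a decrease of at least $\tfrac{\mu_{k+k'}}{2kL_2}(\mathcal L(\bfbeta)-\mathcal L(\bfbeta^*))$; telescoping the contraction $\mathcal L(\bfbeta^{(t+1)})-\mathcal L(\bfbeta^*)\le (1-\tfrac{\mu_{k+k'}}{2kL_2})(\mathcal L(\bfbeta^{(t)})-\mathcal L(\bfbeta^*))$ over $R$ iterations yields the stated bound on $R$, while the support has size $k'$ by construction. I expect the main obstacle to lie in the selection step rather than the convergence count: one must verify that restricted strong convexity genuinely survives the partial minimization $\tilde{\mathcal L}$ and that $\lambda$ can be placed in the correct sub-threshold window uniformly across iterations, so that Theorem \ref{thm:q-reg} and the \citet{AY2023} activation result apply with the constants as stated; the removal bound is then the clean combinatorial step that pins down the bicriteria size of $k'$.
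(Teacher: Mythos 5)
Your proposal is correct in outline, and for the pivotal step it coincides with the paper's argument: both reduce each iteration's nonconvex subproblem to a group LASSO subproblem via Theorem \ref{thm:q-reg}, and both invoke \citet{AY2023} (their Lemma 3.2) to identify the activated group as the one maximizing the off-support gradient norm --- your envelope-theorem computation $\nabla\tilde{\mathcal L}(0)\vert_{T_i} = \nabla\mathcal L(\bfbeta^{(t)})\vert_{T_i}$ for $i\notin S$ is precisely the unstated mechanism behind the paper's citation. Where you genuinely diverge is afterwards: the paper's proof simply cites the group OMPR guarantee as a black box (Corollary A.10 of \citet{AY2023}), whereas you re-derive the convergence bound from first principles via the standard restricted-strong-convexity/smoothness local-search template. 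Your re-derivation is essentially sound but carries sketch-level slack that the cited corollary resolves: the inequality $\norm{\bfbeta'-\bfbeta^*}_2^2 \le \frac{2}{\mu_{k+k'}}(\mathcal L(\bfbeta')-\mathcal L(\bfbeta^*))$ does not follow from restricted strong convexity alone, since $\nabla\mathcal L(\bfbeta')$ is nonzero off $S'$ (one must carry the cross term $\langle \nabla\mathcal L(\bfbeta'), \bfbeta^*-\bfbeta'\rangle$ and treat the case $\mathcal L(\bfbeta')\le \mathcal L(\bfbeta^*)$ separately); your intermediate support $S'=S\cup\{i\}$ has size $k'+1$ (a mild variant of the paper's simultaneous swap), so the strong-convexity level needed is $k+k'+1$ rather than $k+k'$; and your contraction factor $1-\frac{\mu_{k+k'}}{2kL_2}$ yields $R\ge \frac{2kL_2}{\mu_{k+k'}}\log\frac{\mathcal L(\bfbeta^{(0)})-\mathcal L(\bfbeta^*)}{\eps}$, a factor of $2$ above the stated bound, harmless for a sketch but requiring the tighter accounting in \citet{AY2023} to match the theorem as written. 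Conversely, the caveats you flag for the selection step --- that restricted strong convexity must survive the partial minimization $\tilde{\mathcal L}$, and that $\lambda$ must sit in the correct sub-threshold window below $\tau=\max_{i\notin S}\norm{\nabla\tilde{\mathcal L}(0)\vert_{T_i}}_2$ at every iteration --- are genuine conditions that the paper's terse proof (and its phrase ``for $\lambda$ sufficiently large'' in Algorithm \ref{alg:ompr-nonconvex}) leaves implicit, so making them explicit is a real gain in rigor. In short, your route buys a self-contained proof at the cost of constant-level bookkeeping, while the paper buys brevity by outsourcing the entire local-search analysis.
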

\begin{proof}
By Theorem \ref{thm:q-reg}, if the optimization problem in Algorithm \ref{alg:ompr-nonconvex} with $q$ replaced by the absolute value function has a unique minimizer with group sparsity at most $1$, then $\hat\bfbeta$ is a unique global minimizer with group sparsity at most $1$, and coincides with this Group LASSO solution. Lemma 3.2 of \citet{AY2023} then establishes that this solution is supported on the group that maximizes the $\ell_2$ norm of the gradient, which in turn implies Theorem \ref{thm:ompr-bicriteria} via guarantees for the group orthogonal matching pursuit with replacement algorithm (Corollary A.10 of \citet{AY2023}). 
\end{proof}


\section{Additional details on experiments}

\subsection{Experimental setup}
\label{app:setup}

\paragraph{ImageNet \citep{deng2009imagenet}.} ImageNet is the most widely used vision dataset and is considered as the de facto
benchmark in the neural network pruning literature, culminating in the state
of the art results in \citet{kuznedelev2023accurate}. We
use ResNet50 and a standard training setup (90 epochs, SGD
with cosine learning rate and momentum, weight decay). We 
reshape the $4$-dimensional ($H\times W\times C_{\text{in}}\times C_{\text{out}}$) kernel tensors used in
convolutional layers to 2D matrices of shape $HWC_{\text{in}}\times C_{\text{out}}$, which define the
2D block candidates for pruning.
We prune all layers uniformly, except for layers
with $<100$ blocks, which we do not prune at all, to avoid degeneracy at high sparsities.

\paragraph{Criteo \citep{diemert2017attribution}.} Criteo is a standard public dataset for the clickthrough rate (CTR) prediction task, which consists of 33M training examples with 13 numerical and 26 categorical features. The model we sparsify is a standard fully connected DNN with three $400$-width layers and an additional embedding layer to transform each input feature into an embedding vector of size $10$ (for a total embedding width of
$390$). We note that a simple MLP is often a fairly competitive model for this task \citep{NMS+2019}. We prune the first dense layer after the embedding
layer. We use Adam optimizer with a learning rate that decays
exponentially from $2\cdot 10^{-2}$ to $3\cdot 10^{-4}$. We train
to minimize the cross-entropy loss for $25$ epochs with a batch size
of $32768$.

\subsection{Block sparsification results on Criteo}

We give our block sparsification results on the Criteo dataset in Table \ref{table:criteo}.

\begin{table*}[tb]
\caption{Block sparsification on Criteo. The validation losses are an average of three runs. Our dense baseline  validation loss is $0.4489$.}
\label{table:criteo}
\vskip 0.15in
\begin{center}
\begin{small}
\begin{sc}
\resizebox{0.8\columnwidth}{!}{%
\begin{tabular}{lcccr}
\toprule
 & \multicolumn{3}{c}{Validation Loss}\\
\midrule
\midrule
Sparsity: 90\% & Block size: 5 & Block size: 10 & Block size: 20
\\
\midrule
Magnitude    & $0.4523$ & $0.4693$ & $0.4923$\\
PowerPropagation & $0.4521$ & $0.4572$ & $0.4920$\\
ACDC & $0.4517$ & $0.4580$ & $0.4829$\\
SequentialAttention++ (ours) & ${\bf 0.4515}$ & ${\bf 0.4535}$ & ${\bf 0.4596}$\\
\midrule
\midrule
Sparsity: 95\% & Block size: 5 & Block size: 10 & Block size: 20
\\
\midrule
Magnitude    & $0.4586$ & $0.4892$ & $0.4998$\\
PowerPropagation & $0.4547$ & $0.4768$ & $0.4946$\\
ACDC & $0.4547$ & $0.4754$ & $0.4961$\\
SequentialAttention++ (ours) & ${\bf 0.4540}$ & ${\bf 0.4595}$ & ${\bf 0.4715}$\\
\midrule
\midrule
Sparsity: 97\% & Block size: 5 & Block size: 10 & Block size: 20
\\
\midrule
Magnitude    & $0.4656$ & $0.5004$ & $0.5079$\\
PowerPropagation & $0.4587$ & $0.5061$ & $0.5093$\\
ACDC & $0.4606$ & $0.4936$ &  $0.5056$\\
SequentialAttention++ (ours) & ${\bf 0.4570}$ & ${\bf 0.4708}$ & ${\bf 0.4865}$\\
\midrule
\midrule
Sparsity: 98\% & Block size: 5 & Block size: 10 & Block size: 20
\\
\midrule
Magnitude    &$0.4717$ &$0.5145$ &$0.5447$\\
PowerPropagation & $0.4622$ & $0.5158$ &$0.5379$\\
ACDC &$0.4692$ & $0.4929$&$0.5184$\\
SequentialAttention++ (ours) &${\bf 0.4601}$ &${\bf 0.4904}$ &${\bf 0.5162}$\\
\midrule
\midrule
Sparsity: 99\% & Block size: 5 & Block size: 10 & Block size: 20
\\
\midrule
Magnitude    &$0.4881$& $0.5376$&$0.5482$\\
PowerPropagation & $0.5017$ & $0.5295$ &$0.5425$\\
ACDC & $0.5050$ &$0.5153$ &$0.5427$\\
SequentialAttention++ (ours) &${\bf 0.4803}$ &${\bf 0.5068}$ &${\bf 0.5253}$\\
\bottomrule
\end{tabular}
}
\end{sc}
\end{small}
\end{center}
\vskip -0.1in
\end{table*}

\subsection{Additional tricks}

In addition to the basic algorithm described in Section \ref{sec:algorithm}, our implementation of SequentialAttention++ incorporates several other ingredients for improved empirical performance. First, we confirm the observation of \citet{PIVA2021} that resetting the optimizer between each phase of SequentialAttention++ is crucial for good performance. We note that this is also suggested by our theoretical results (Theorem \ref{thm:ompr-bicriteria}), which suggests that each of the dense and sparse phases should be thought of as a separate optimization problem that is solved independently. Similarly to \citet{kuznedelev2023accurate},
we also observe that weight decay significantly boosts 
performance, even when applied to the attention logits.

Second, we observe that pruning each layer of the network separately performs better than a global pruning algorithm which attempts to prune all layers at once. We suggest that this may be the case due to ``bottlenecking'' behavior, where a global pruning algorithm may choose to almost completely eliminate a layer which may destroy the connectivity of the neural network. While this is not the case when pruning individual parameters, pruning large blocks can easily eliminate a layer. We use uniform
sparsity across layers, but choose not to sparsify layers
containing less than $100$ blocks. This is because layers
have greatly varying sizes, and want to avoid a sharp 
quality drop from overpruning smaller layers, which was
observed in experiments. Finally, we clip attention weights to the range $[n\cdot\text{density}, n/\text{density}]$ to avoid them becoming too small or too large.

\subsection{Additional results}

We provide additional plots for our experiments in Figures \ref{fig:imagenet} and \ref{fig:criteo}. In Figure \ref{fig:imagenet}, we plot tradeoffs between the validation accuracy and weight matrix sparsity for SequentialAttention++ and ACDC \cite{PIVA2021}. In Figure \ref{fig:criteo}, we plot tradeoffs between the validation loss and AUC against weight matrix sparsity for SequentialAttention++ and our three baseline algorithms of Magnitude Pruning, Powerpropagation \cite{SJPLT2021}, and ACDC \cite{PIVA2021}.

\begin{figure*}[tb]
    \centering
    \includegraphics[width=0.49\linewidth]{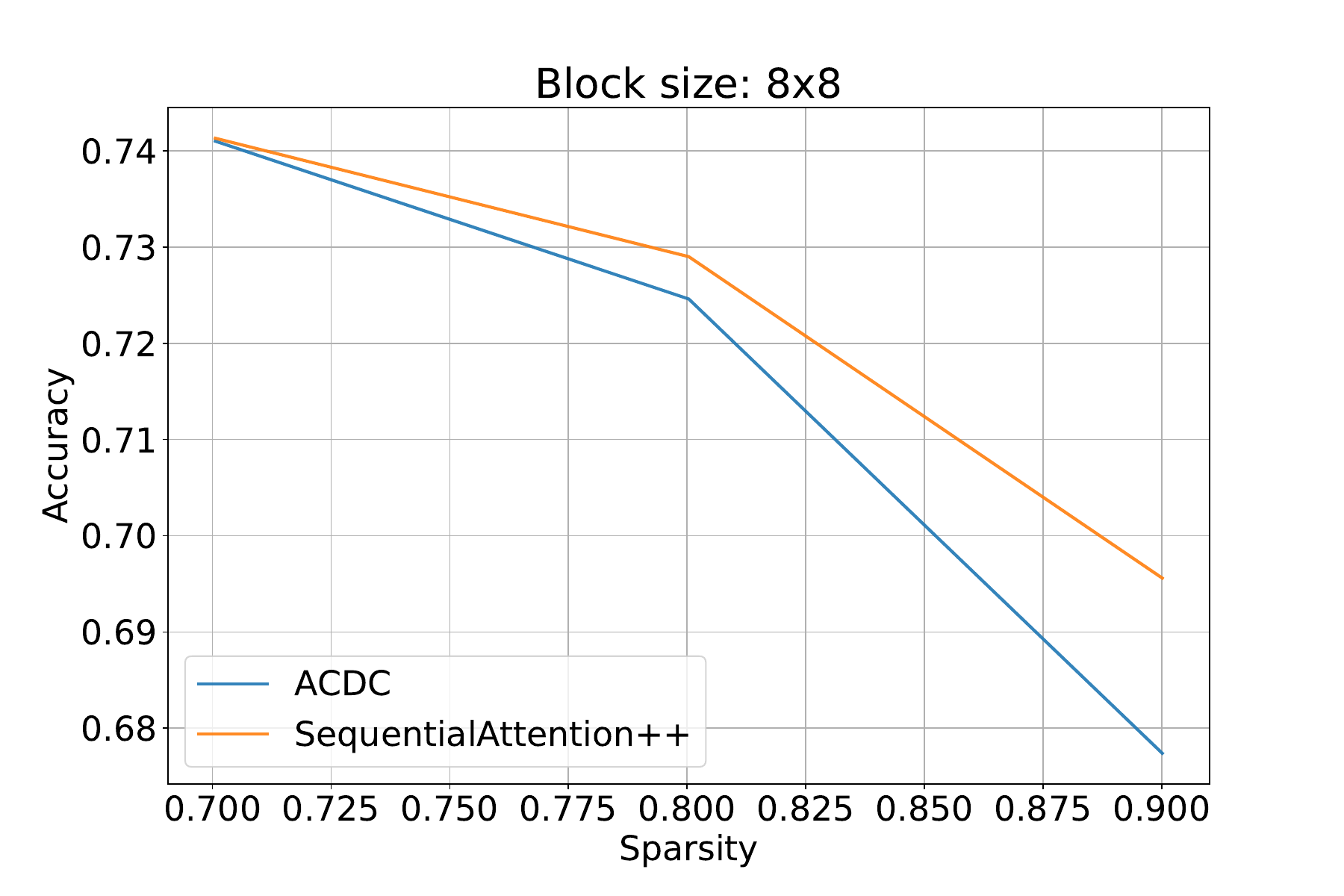}
    \includegraphics[width=0.49\linewidth]{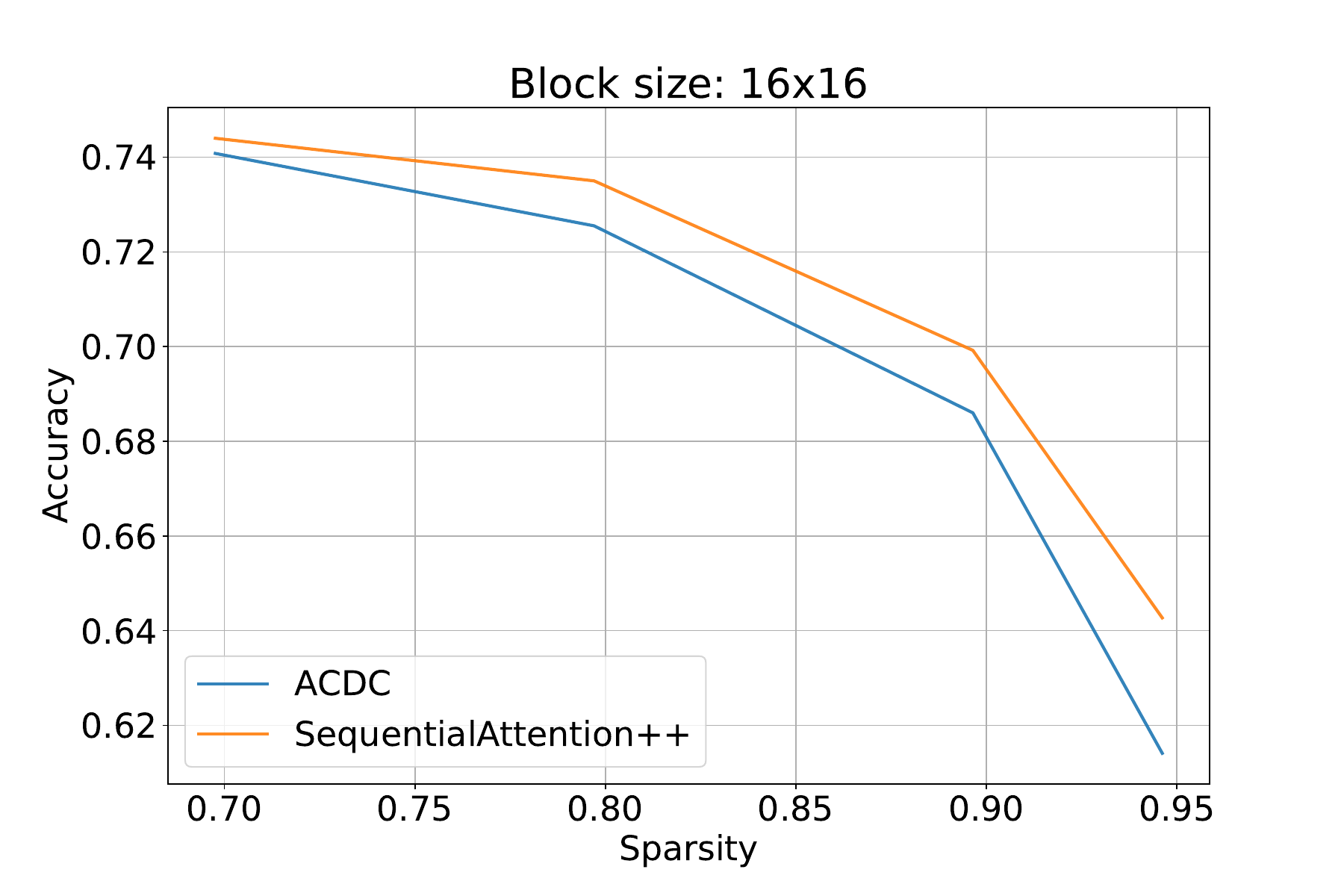}
    \includegraphics[width=0.49\linewidth]{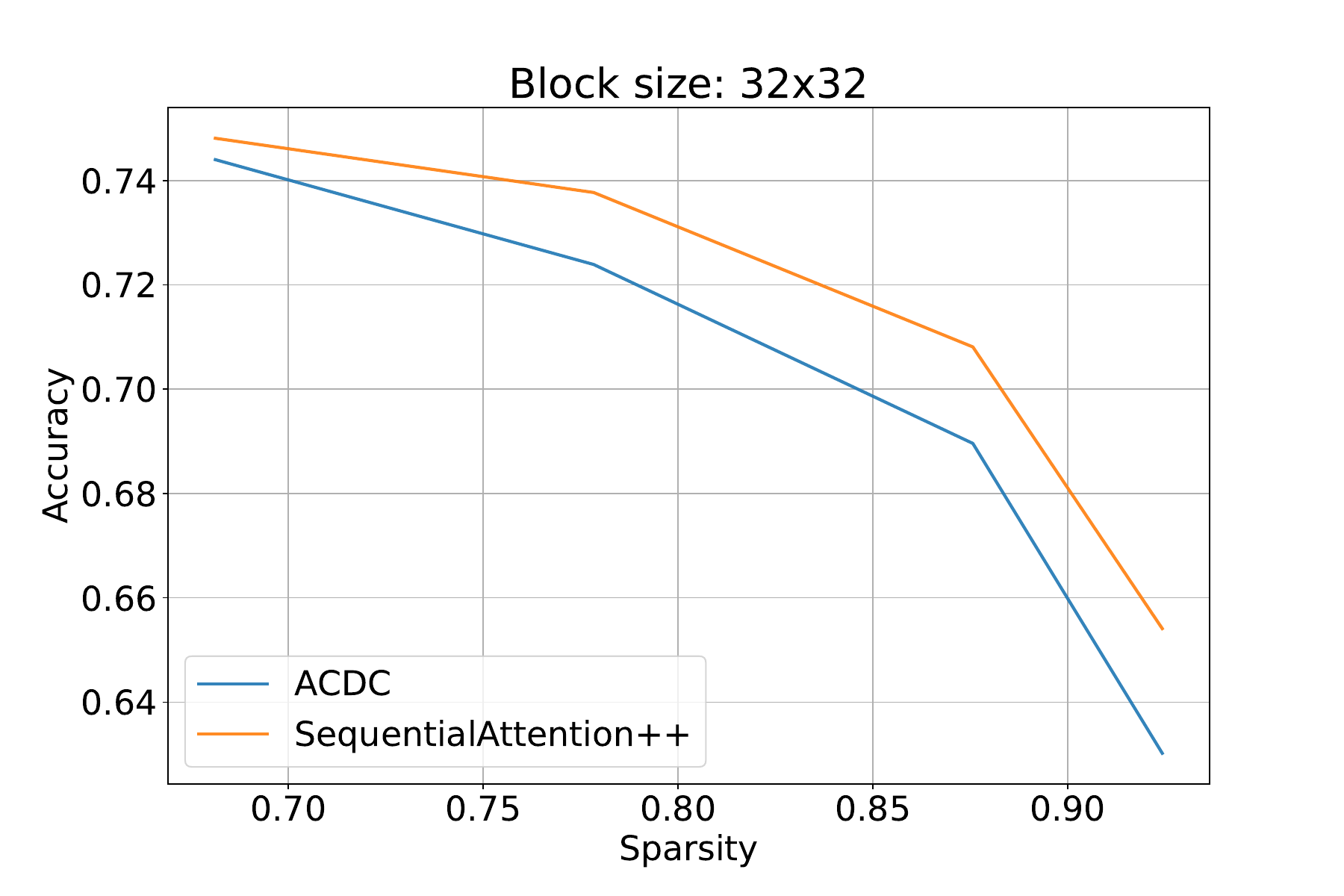}
    \includegraphics[width=0.49\linewidth]{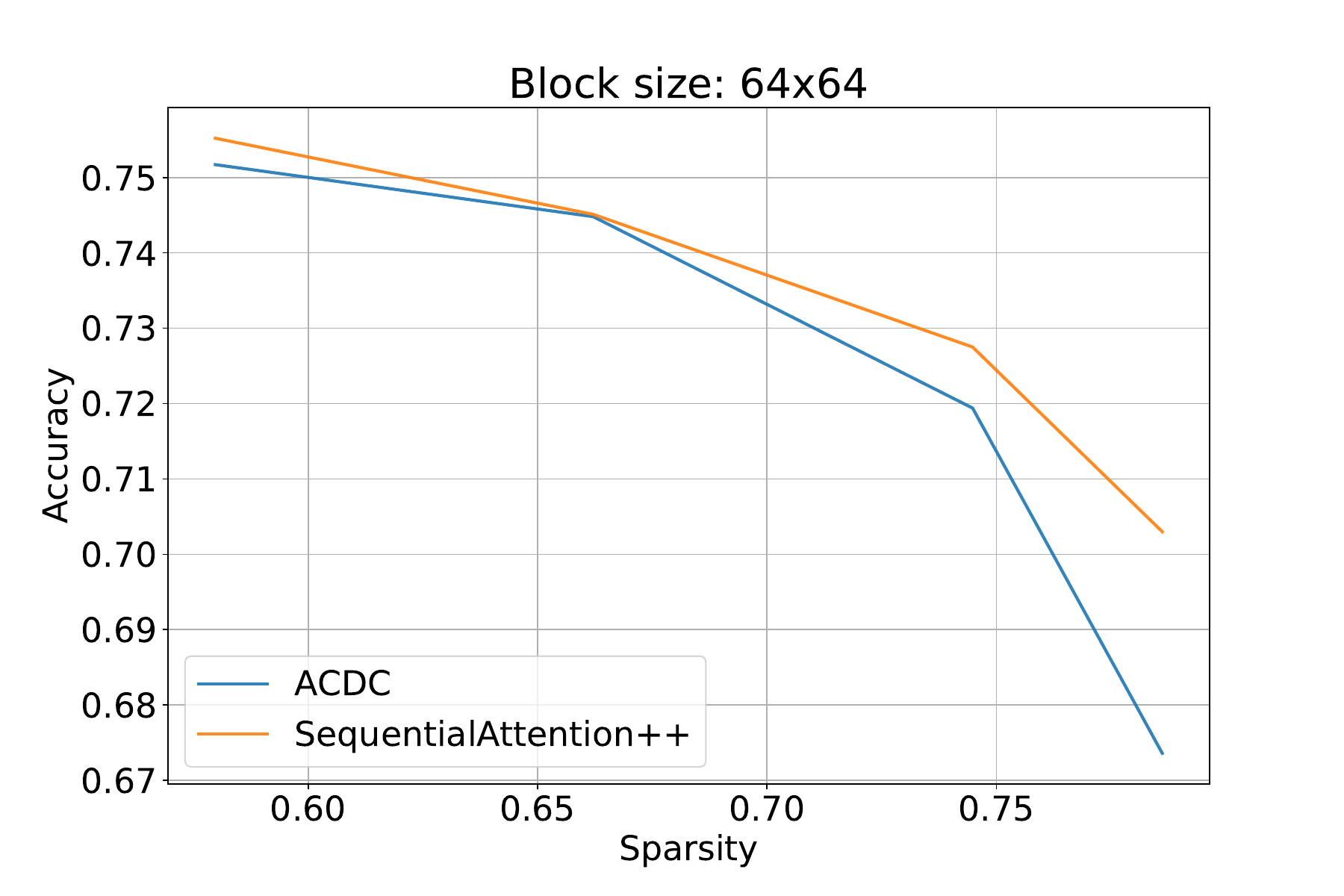}
    \caption{Block sparsification on Imagenet.}
    \label{fig:imagenet}
\end{figure*}

\begin{figure*}[tb]
    \centering
    \includegraphics[width=0.49\linewidth]{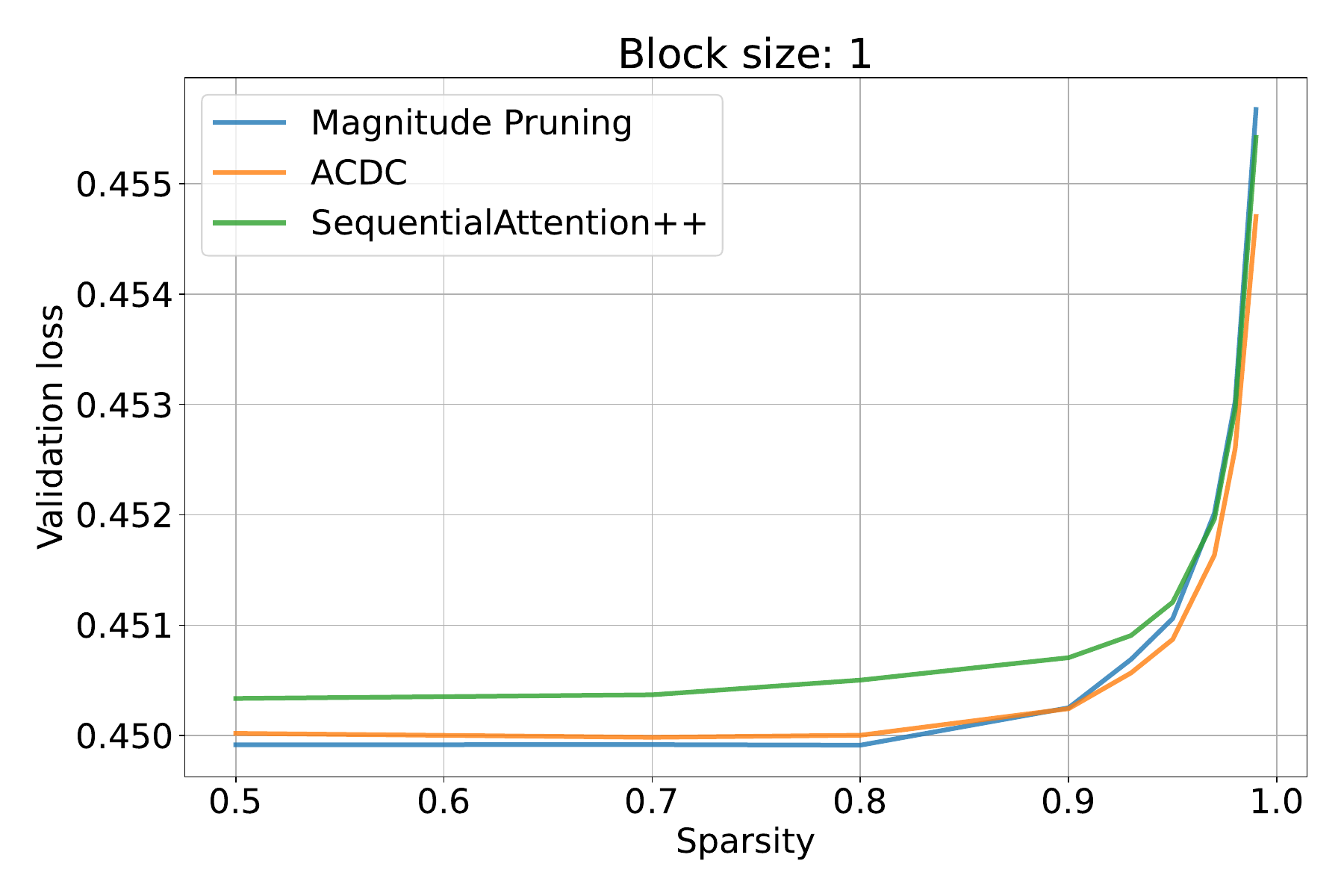}
    \includegraphics[width=0.49\linewidth]{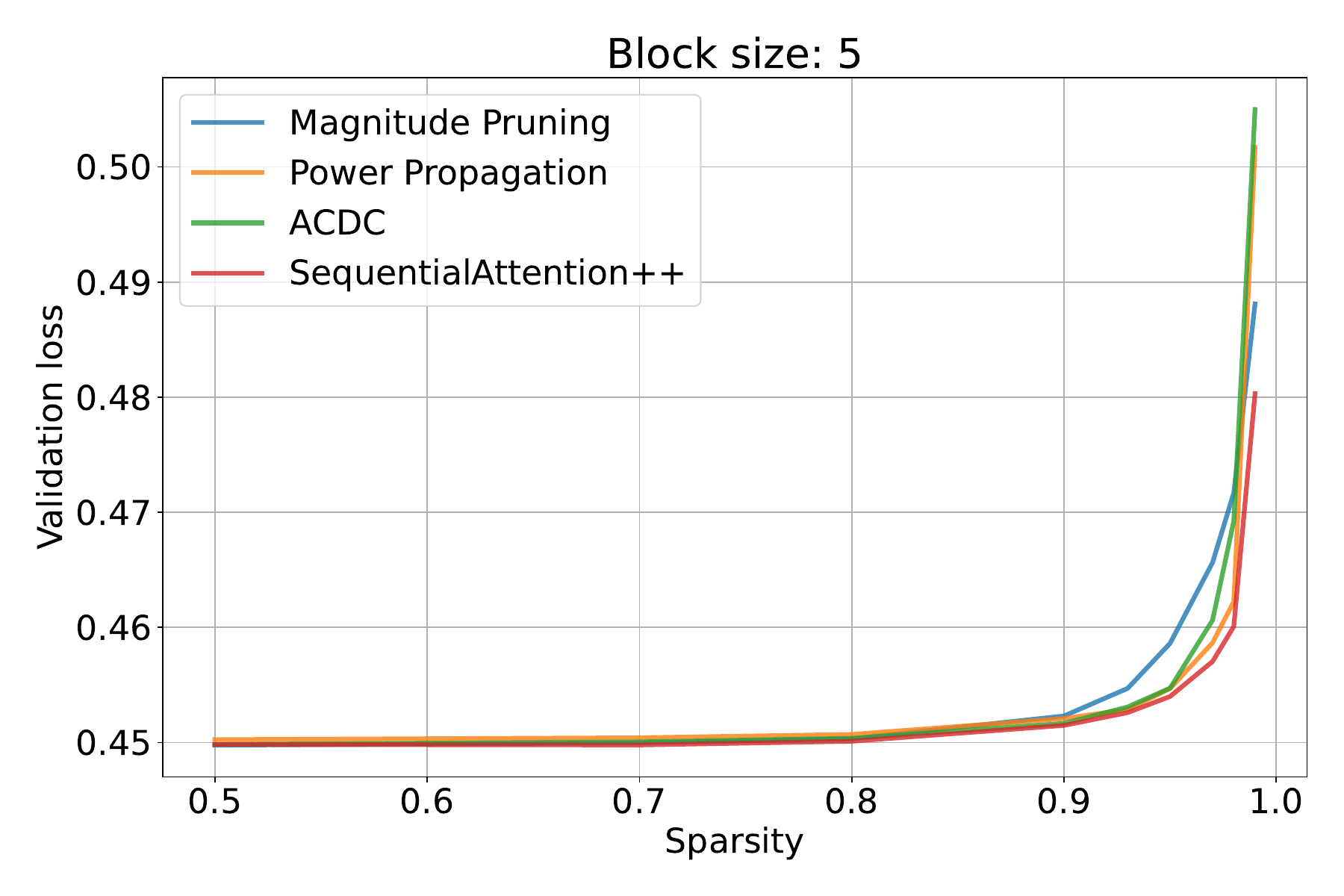}
    \includegraphics[width=0.49\linewidth]{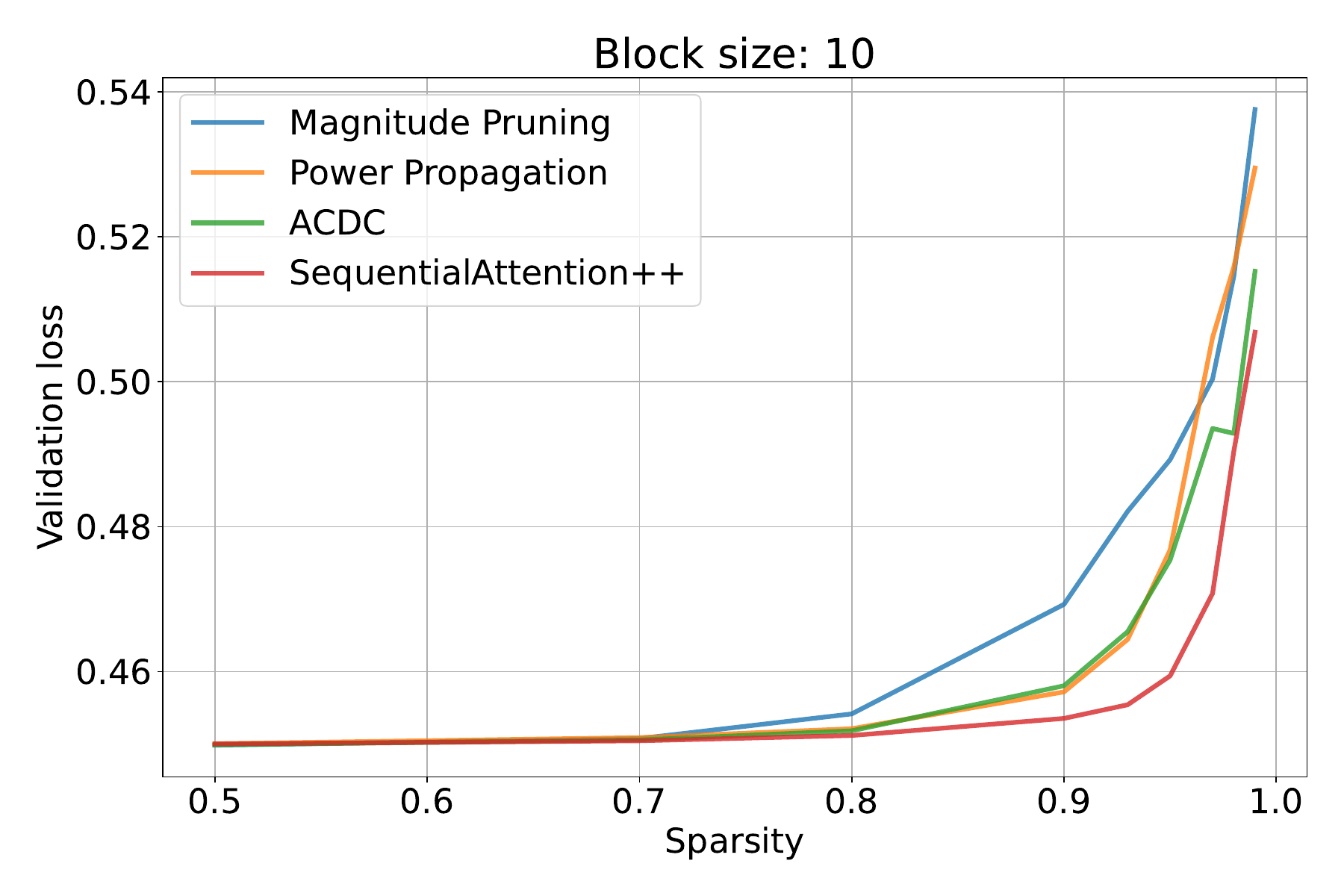}
    \includegraphics[width=0.49\linewidth]{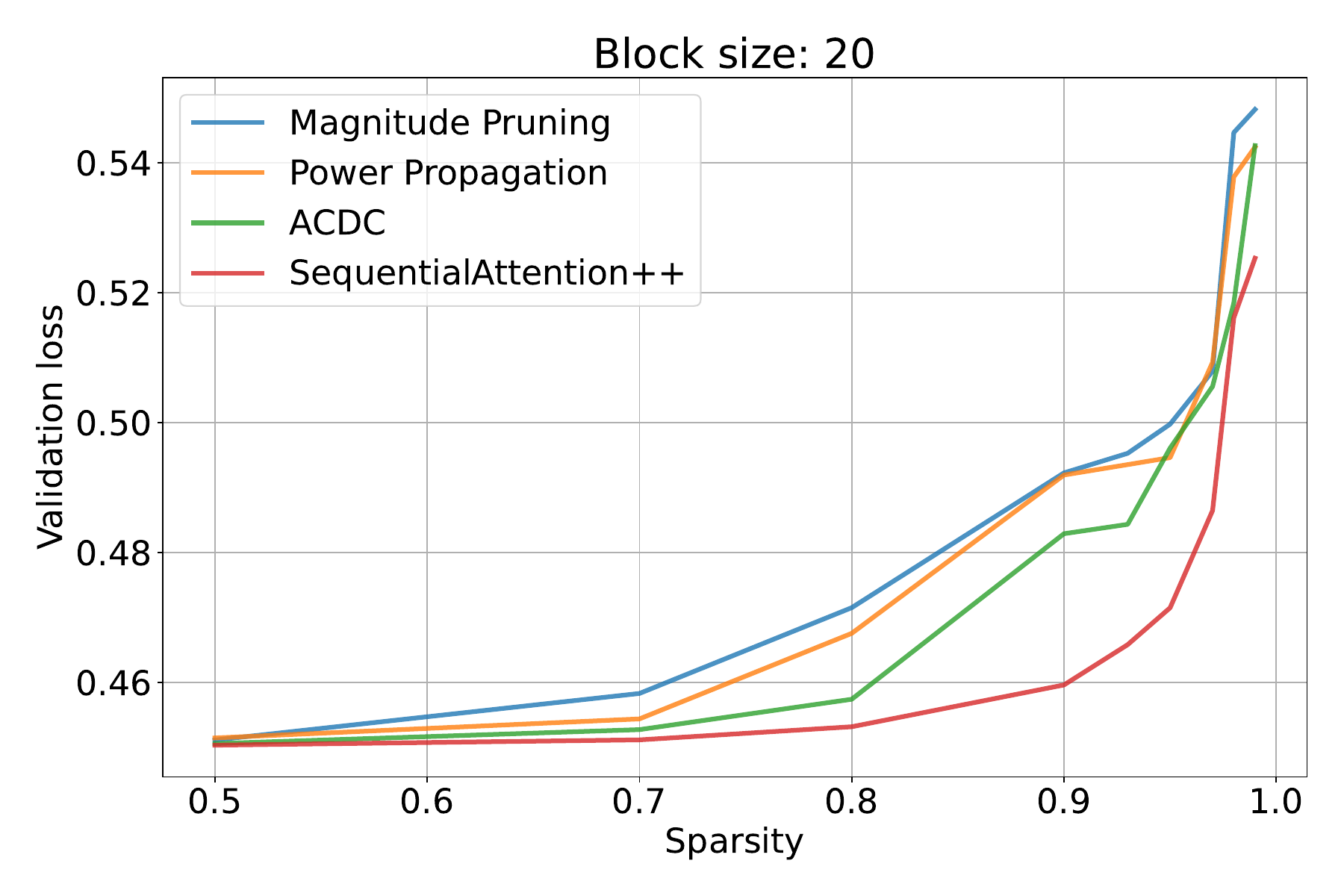}
    \includegraphics[width=0.49\linewidth]{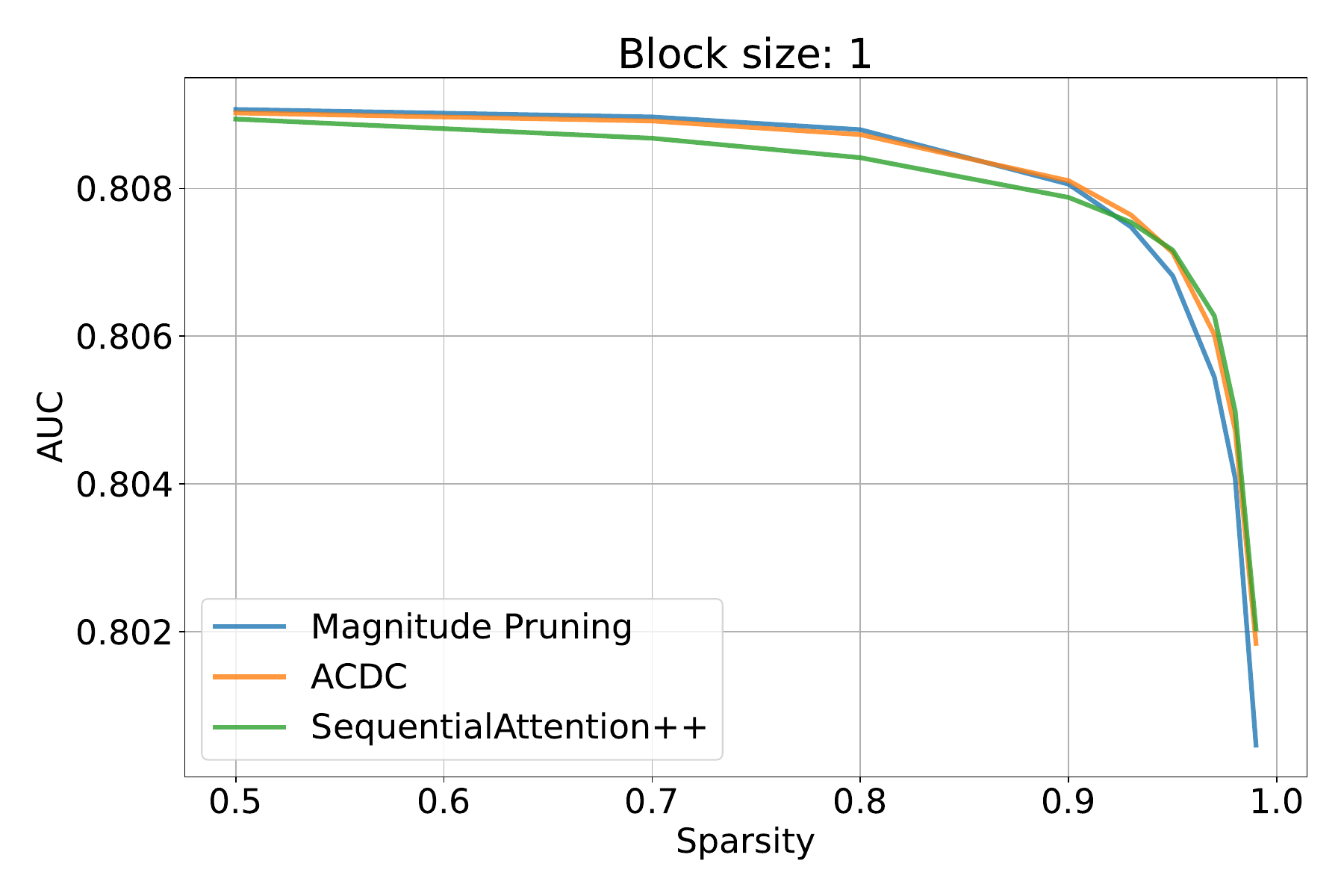}
    \includegraphics[width=0.49\linewidth]{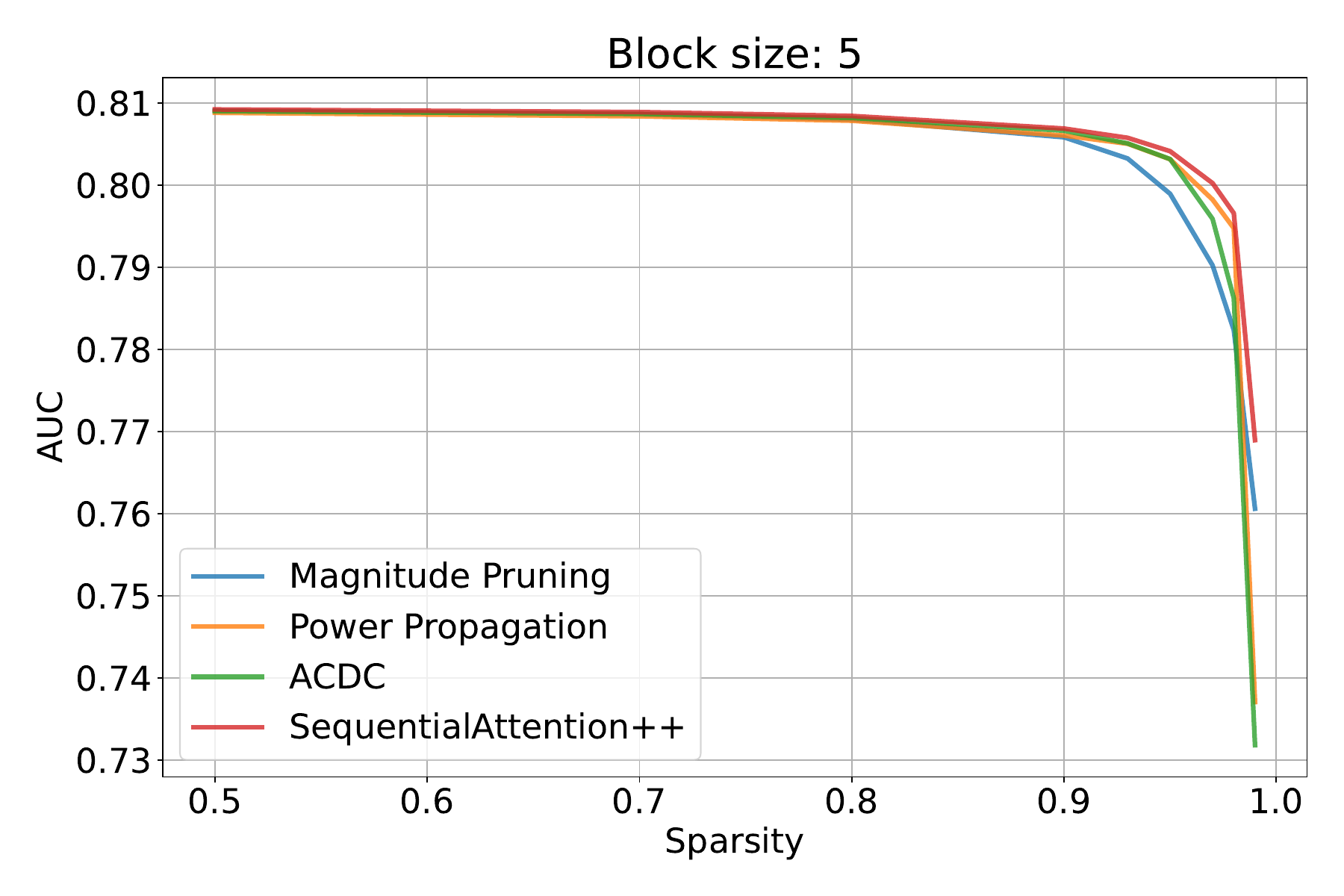}
    \includegraphics[width=0.49\linewidth]{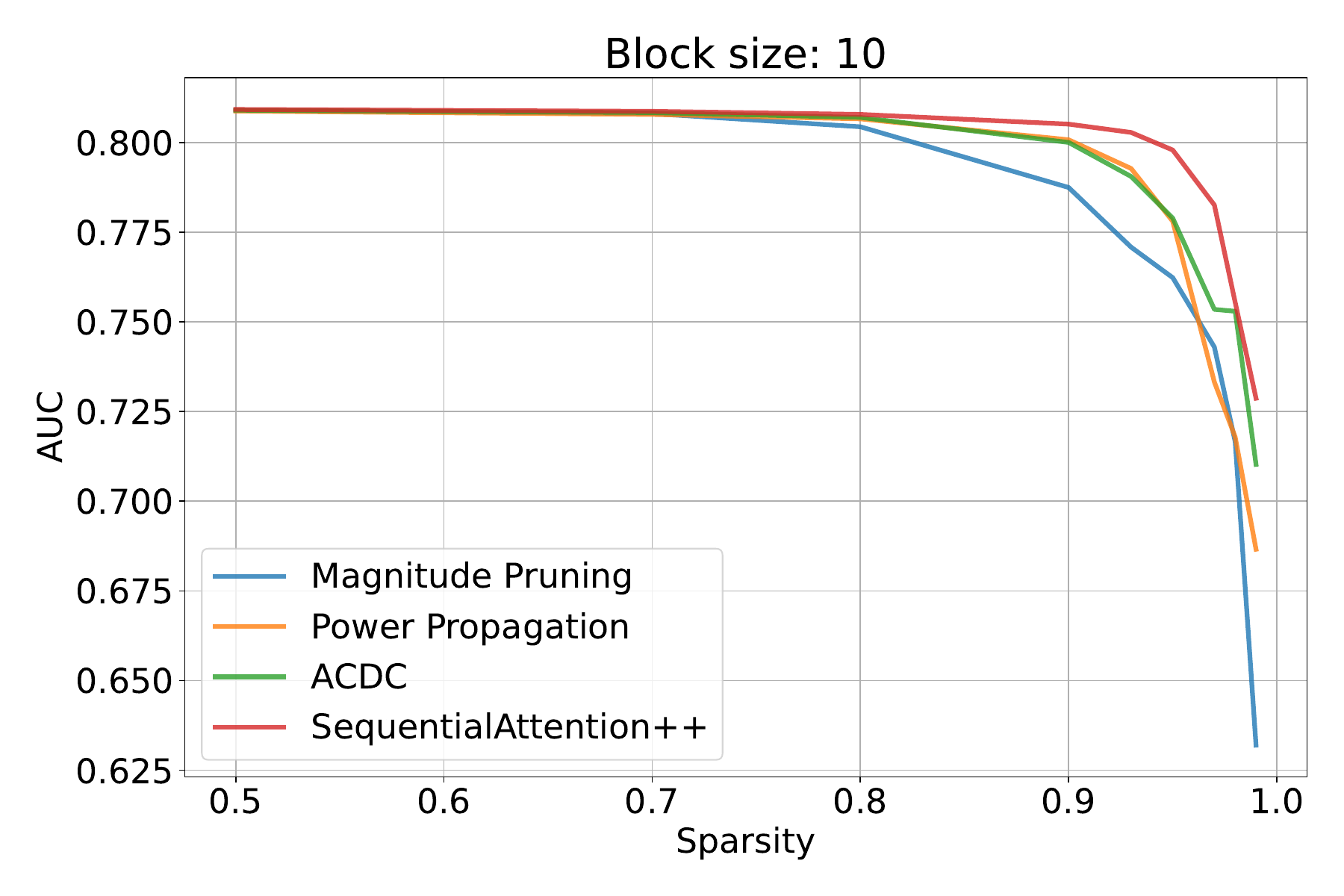}
    \includegraphics[width=0.49\linewidth]{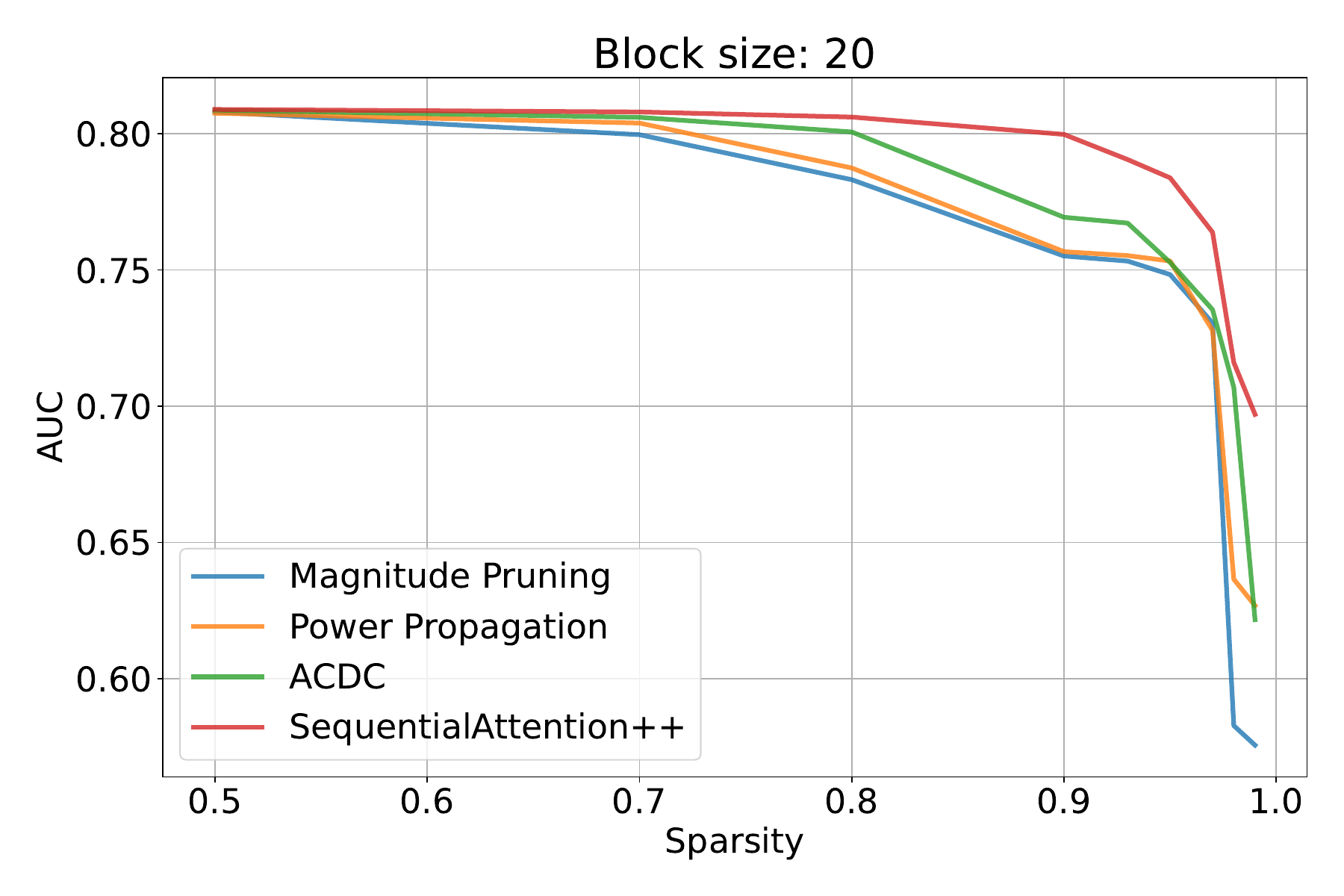}
    \caption{Block sparsification on Criteo.
    There are no Powerpropagation results for block size $1$ because the algorithm diverged.}
    \label{fig:criteo}
\end{figure*}

\section{Ablations}

\subsection{Importance of the \textsc{sparsification} phase.}
\label{sec:ablation_sparsification}

\begin{table*}[tb]
\caption{Removing the \textsc{sparsification} phase from SequentialAttention++. The results show validation accuracy for training block-sparse ResNet50 on
ImageNet. We use the same sparsities as in
Table~\ref{table:imagenet}.}
\label{table:sparsification_ablation}
\vskip 0.15in
\begin{center}
\begin{small}
\begin{sc}
\resizebox{0.95\columnwidth}{!}{%
\begin{tabular}{lcccc|cccr}
\toprule
& \multicolumn{8}{c}{Validation Accuracy}\\
\midrule
& \multicolumn{4}{c|}{Block size: $8\times 8$} & \multicolumn{4}{c}{Block size: $16\times 16$} \\
Sparsity: & $70\%$ & $80\%$ & $90\%$ & $95\%$ & $70\%$ & $80\%$ & $90\%$ & $95\%$ \\
Validation Accuracy & --- & $0.61922$ & $0.61405$ & $0.69678$ & $0.68152$ & $0.68658$ & $0.70079$ & $0.72845$\\
Diff from baseline & --- & $-0.0235$ & $-0.04006$ & $-0.00624$ & $-0.01408$ & $-0.01262$ & $-0.00738$ & $+0.00089$ \\
\midrule
& \multicolumn{4}{c|}{Block size: $32\times 32$} & \multicolumn{4}{c}{Block size: $64\times 64$} \\
Sparsity: & 
$68\%$& $78\%$& $88\%$& $92\%$& $58\%$& $66\%$& $74\%$& $79\%$ \\
Validation Accuracy & $0.72333$ & $0.72666$ & $0.73346$ & $0.7432$ & $0.74194$ & $0.74099$ & $0.74268$ & $0.75104$\\
Diff from baseline & $-0.00569$ & $-0.00837$ & $-0.00429$ & $-0.00196$ & $+0.00059$ & $-0.00301$ & $-0.00547$ & $-0.00421$\\
\bottomrule
\end{tabular}
}
\end{sc}
\end{small}
\end{center}
\vskip -0.1in
\end{table*}

We perform experiments to study the effect of the \textsc{sparsification} phase,
as described in Section~\ref{sec:sparsification}, to the final accuracy. To that end,
we remove the \textsc{sparsification} phase and only apply alternating
\textsc{dense} and \textsc{sparse} phases, each of equal duration. The final phase
before the last fine-tuning is now a \textsc{dense} phase.

The results in Table~\ref{table:sparsification_ablation} show that, on average
over different block sizes and densities, removing the \textsc{sparsification}
phase decreases validation accuracy by $0.009$, or $0.9$ percentage points. We 
conclude that the \textsc{sparsification} phase is an important feature of 
SequentialAttention++.

\subsection{Choice of the \textsc{sparsification} exponent.}
\label{sec:ablation_rampup_factor}

In this section, we try different values of the constant used in the exponent of
the schedule of the \textsc{sparsification} operation. We remind that during a
\textsc{sparsification} phase, the sparsity varies as
$\mathrm{sparsity}(t) = s\cdot \frac{1 - e^{-ct}}{1 - e^{-c}}$ for $t\in [0,1]$,
where $s$ is the target sparsity. The constant $c$ determines how non-linearly the
sparsity interpolates from $0$ to $s$.

\begin{table*}[tb]
\caption{Modifying the exponent constant in the schedule of the 
\textsc{sparsification} phase. Block-sparse training of ResNet50 on ImageNet
for $90\%$ sparsity.}
\label{table:sparsification_constant_ablation}
\vskip 0.15in
\begin{center}
\begin{small}
\begin{sc}
\begin{tabular}{lcccr}
\toprule
\multicolumn{5}{c}{Validation Accuracy}\\
\midrule
Block size & 
$8\times 8$ &
$16\times 16$ &
$32\times 32$ &
$64\times 64$
\\
\midrule
$c=2$ & 
$0.69403$ &
$0.69613$ & 
$0.70614$ & 
$0.72264$ 
\\
$c=4$ &  
$0.6956$ & 
$0.6992$ &  
$0.70817$ &
$0.72756$
\\
$c=8$ 
& $0.69202$ 
& $0.70036$ 
& $0.70976$ 
& $0.72614$ 
\\
\bottomrule
\end{tabular}
\end{sc}
\end{small}
\end{center}
\vskip -0.1in
\end{table*}

\end{document}